\newcommand{ \eqdef }{   \ensuremath{\stackrel{\mbox{\upshape\tiny def.}}{=}}
}
\definecolor{darkmidnightblue}{rgb}{0.0, 0.2, 0.4}
\definecolor{aogreen}{rgb}{0.0, 0.5, 0.0}
\definecolor{brickred}{rgb}{0.8, 0.25, 0.33}
\newtheorem{setting}{Setting}
\newtheorem{assumption}{Assumption}
\newtheorem{lemma}{Lemma}
\newtheorem{theorem}{Theorem}
\newtheorem{definition}{Definition}
\newtheorem{example}{Example}
\newtheorem{remark}{Remark}
\definecolor{darkcyan}{rgb}{0.0, 0.55, 0.55}
\definecolor{MidnightBlue}{RGB}{25,25,112}
\definecolor{MidnightBlueComplementingGreen}{RGB}{25,112,25}
\definecolor{MidnightBlueComplementingPurple}{RGB}{112,25,112}
\definecolor{MidnightBlueComplementingRed}{RGB}{112,25,69}
\definecolor{WowColor}{rgb}{.75,0,.75}
\definecolor{MildlyAlarming}{rgb}{0.85,0.25,0.1}
\definecolor{SubtleColor}{rgb}{0,0,.50}
\definecolor{antiquefuchsia}{rgb}{0.57, 0.36, 0.51}
\definecolor{fashionfuchsia}{rgb}{0.96, 0.0, 0.63}
\definecolor{jade}{rgb}{0.0, 0.66, 0.42}
\definecolor{caribbeangreen}{rgb}{0.0, 0.8, 0.6}
\definecolor{aquamarine}{rgb}{0.5, 0.8, 0.85}
\definecolor{darkmidnightblue}{rgb}{0.0, 0.2, 0.4}
\definecolor{attentioncolor}{RGB}{152,90,81}
\definecolor{burgred}{RGB}{40,3,22}
\definecolor{AKGreen}{RGB}{17,123,92}
\definecolor{egyptianblue}{rgb}{0.06, 0.2, 0.65}
\definecolor{Turquoise}{RGB}{64,224,208}
\definecolor{darkjade}{RGB}{0,122,84}
\definecolor{Window1}{RGB}{92,150,31}%
    \definecolor{Window1dark}{RGB}{41,67,13}%
\definecolor{Window2}{RGB}{255,168,28}
    \definecolor{Window2dark}{RGB}{114,75,12}
\definecolor{Window3}{RGB}{255,96,33}
    \definecolor{Window3dark}{RGB}{97,36,12}
\definecolor{InputColor}{RGB}{20,255,177}
    \definecolor{InputColorlight}{RGB}{222,237,229}
\NewDocumentCommand{\F}{o}{
    \IfValueT{#1}{
            \mathbb{F}_{#1}
        }
    \IfValueF{#1}{
            \mathbb{F}
        }
                    }
\NewDocumentCommand{\R}{o}{
    \IfValueT{#1}{
            \mathbb{R}^{#1}
        }
    \IfValueF{#1}{
            \mathbb{R}
        }
                    }
\NewDocumentCommand{\N}{o}{
    \IfValueT{#1}{
            \mathbb{N}^{#1}
        }
    \IfValueF{#1}{
            \mathbb{N}
        }
                    }
\newcommand{\cF}{\mathcal{F}}
\newcommand{\cN}{\mathcal{N}}
\newcommand{\E}{\mathbb{E}}
\definecolor{MidnightBlue}{RGB}{25,25,112}
\definecolor{MidnightBlueComplementingGreen}{RGB}{25,112,25}
\definecolor{MidnightBlueComplementingPurple}{RGB}{112,25,112}
\definecolor{MidnightBlueComplementingRed}{RGB}{112,25,69}
\definecolor{coolblack}{rgb}{0.0, 0.18, 0.39}
\definecolor{deepjunglegreen}{rgb}{0.0, 0.29, 0.29}
\definecolor{applegreen}{rgb}{0.55, 0.71, 0.0}
\definecolor{WowColor}{rgb}{.75,0,.75}
\definecolor{MildlyAlarming}{rgb}{0.85,0.25,0.1}
\definecolor{SubtleColor}{rgb}{0,0,.50}
\definecolor{SubtleColor2}{rgb}{0.6,0.21,.50}
\definecolor{lasallegreen}{rgb}{0.03, 0.47, 0.19}
\newcounter{margincounter}
\NewDocumentCommand{\AK}{mo}{
    \IfValueF{#2}{
                        {{\scriptsize
                            \textcolor{deepjunglegreen}{
                            \hfill\\
                                \textbf{A:}
                                \textit{{#1}}
                            \hfill\\
                            }
                        }}
        }
    \IfValueT{#2}{
                        \marginnote{{\scriptsize
                            \textcolor{deepjunglegreen}{ 
                            \textbf{A:}
                            \textit{{#1}}
                            }
                        }}
        }
                    }
\newcommand\numberthis{\addtocounter{equation}{1}\tag{\theequation}}
\newcounter{termcounter}
\renewcommand{\thetermcounter}{\Roman{termcounter}}
\crefname{term}{term}{terms}
\def\term{\@ifnextchar[\term@optarg\term@noarg}
\def\term@optarg[#1]#2{%
  \textup{#1}%
  \def\@currentlabel{#1}%
  \def\cref@currentlabel{[][2147483647][]#1}%
  \cref@label[term]{#2}}
\def\term@noarg#1{%
  \refstepcounter{termcounter}%
  \textup{(\thetermcounter)}%
  \cref@label[term]{#1}}
\title{Sharp Generalization Bounds for Foundation Models with Asymmetric Randomized Low-Rank Adapters}
\renewcommand\numberthis{\addtocounter{equation}{1}\tag{\theequation}}
\renewcommand{\arraystretch}{1.3}  
\author{Anastasis Kratsios\thanks{Corresponding Author: kratsioa@mcmaster.ca}, Tin Sum Cheng, Aurelien Lucchi, Haitz S\'{a}ez de Oc\'{a}riz Borde}
\definecolor{faintgray}{RGB}{245,245,245}     
\definecolor{faintborder}{RGB}{230,230,230}   
\definecolor{lightblack}{gray}{0.4}           
\newcounter{question}
\newtcolorbox[auto counter, use counter=question]{question}[1][]{
  enhanced,
  colback=faintgray,
  colframe=faintborder,
  boxrule=0.2pt,
  arc=2mm,
  title=\textcolor{lightblack}{\textbf{Question~\thequestion}},
  fonttitle=\bfseries,
  before upper={\centering\itshape},
  after title={\vspace{0.5ex}},
  boxsep=4pt,
  left=6pt,
  right=6pt,
  top=4pt,
  bottom=4pt,
  #1
}
\begin{document}

\maketitle

\begin{abstract}
Low-Rank Adaptation (LoRA) has emerged as a widely adopted parameter-efficient fine-tuning (PEFT) technique for foundation models. Recent work has highlighted an inherent asymmetry in the initialization of LoRA's low-rank factors, which has been present since its inception and was presumably derived experimentally. This paper focuses on providing a comprehensive theoretical characterization of asymmetric LoRA with frozen random factors. First, while existing research provides upper‐bound generalization guarantees based on averages over multiple experiments, the behaviour of a single fine-tuning run with specific random factors remains an open question. We address this by investigating the concentration of the typical LoRA generalization gap around its mean. Our main upper bound reveals a sample complexity of $\tilde{\mathcal{O}}\left(\frac{\sqrt{r}}{\sqrt{N}}\right)$ with high probability for rank $r$ LoRAs trained on $N$ samples. Additionally, we also determine the fundamental limits in terms of sample efficiency, establishing a matching lower bound of $\mathcal{O}\left(\frac{1}{\sqrt{N}}\right)$. By more closely reflecting the practical scenario of a single fine-tuning run, our findings offer crucial insights into the reliability and practicality of asymmetric LoRA.
\end{abstract}

\section{Introduction}
\label{s:Introduction}


Recent years have witnessed a shift from specialized models to large foundation models capable of performing a plethora of tasks, particularly in language~\citep{touvron2023,openai2023,bai2023qwentechnicalreport,qwen2025qwen25technicalreport,deepseekai2025deepseekv3technicalreport}.  This paradigm shift has led to the development of increasingly large models, often amounting to billions of weight parameters, trained on massive compute clusters by major high-tech companies.  Downstream consumers and researchers frequently seek to deploy these models by fine-tuning them on proprietary data for specific tasks or by personalizing them.  However, a significant disparity often exists between the computational resources used to train these massive models and the resources available to downstream users.  

\paragraph{Low-Rank Adaptation} This discrepancy has driven the widespread adoption of parameter-efficient fine-tuning (PEFT) methods~\citep{he2022towards, pfeiffer-etal-2020-adapterhub, ding2022deltatuningcomprehensivestudy, yu2022vpetl, han2024parameterefficientfinetuninglargemodels}.  Among these techniques, Low-Rank Adaptation (LoRA) \citep{hu2022LoRA_OG} has become particularly widespread, arguably due to its effectiveness and simplicity.
LoRA creates low-rank factors by parameterizing perturbations to pre-existing weight matrices $W$ of the form $\Delta W=BA$ where both $W$ and $\Delta W$ are, say, $d_{out}\times d_{in}$ matrices and the low-rank factor matrices $B$ and $A$ are $d_{out}\times r$ and $r\times d_{in}$ respectively with $1\le r\ll d_{in},d_{out}$ (thus leading to parametric efficiency). After fine-tuning, it suffices to substitute the original matrix $W$ with its updated variant:
\begin{equation}
\label{eq:LoRA_def}
    W' = W + \Delta W \eqdef W + BA.
\end{equation}
This technique achieves comparable performance to full fine-tuning by recognizing and exploiting the observation that only a small fraction of a large pre-trained model's parameters must be adjusted to adapt to a new task; at a fraction of its computational cost. Such a formulation relies on the assumption that the distribution shift between the pre-training and post-training dataset is not large, which is empirically supported in the context of language. Fine-tuning only a small subset of parameters also helps avoid issues such as catastrophic forgetting~\citep{French1999CatastrophicFI}, ensuring the new model retains the extensive knowledge gained during pre-training by limiting its divergence from the base model.


\paragraph{Asymmetry in Low-Rank Adapters} Among the variety of LoRA techniques that have recently been proposed in the literature (see Section~\ref{s:Introduction_RelatedWorks}), we focus on the lightweight paradigm introduced by~\cite{zhu2024asymmetry}. There, the authors demonstrate that LoRA components are inherently asymmetric. This asymmetry is evident in the typical LoRA initialization, where, for full LoRA (tuning both $A$ and $B$), $A$ is commonly initialized to a random Gaussian matrix (from now on referred to as a \textit{random factor}), while $B$ is initialized to zero~\citep{hu2022LoRA_OG}. This empirically derived procedure aligns with the distinct roles of the $A$ (\textit{feature projector}) and $B$ (\textit{feature extractor}) matrices: $A$ extracts features from the input, and $B$ uses these features to create the desired output. If the goal of adaptation is to approximate a desired output, selectively projecting away a portion of the input feature space might be less damaging than projecting away a significant portion of the output space, especially when the LoRA rank $r$ is much smaller than the output dimension $d_{out}$. This is particularly true if the input features contain redundant information or exhibit a low-rank distribution. We further study this asymmetric LoRA paradigm since it is both simple and efficient (potentially leading to a 2x trainable parameter reduction when $d_{in}=d_{out}$), see Figure~\ref{fig:asymmetry_schematic}.

\begin{figure}[H]
    \centering
    \vspace{-1em}
    \includegraphics[width=0.5\textwidth]{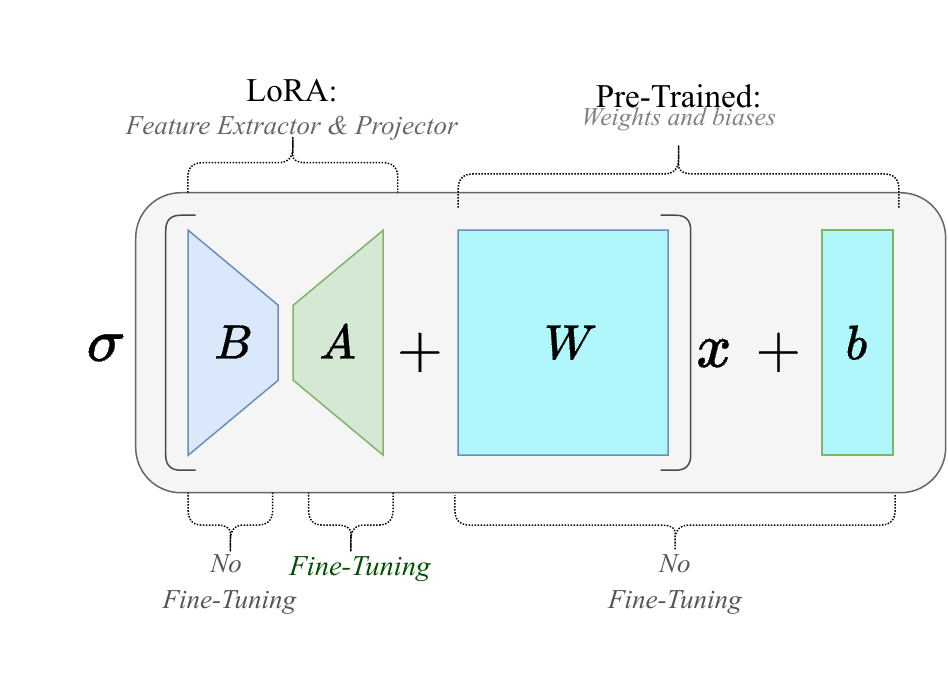}
    \vspace{-2em}
    \caption{Asymmetric low-rank adapter schematic for simple linear layer and a non-linearity $\sigma$. Note that in practice, in Large Language Models (LLMs) we fine-tune the attention matrices instead.}
    \label{fig:asymmetry_schematic}
\end{figure}

\paragraph{Open Theoretical Questions} In this work, we focus on \textit{providing a full theoretical characterization of the asymmetric LoRA case with frozen random factors}. While~\cite{zhu2024asymmetry} discusses valuable intuition and introduces initial theoretical groundwork for the asymmetry phenomenon, several important questions with practical implications for LoRA users remain open. In particular, the original paper derives only an upper bound on the average generalization gap and its conclusions only hold over several, perhaps thousands, of experiments and instantiations of random LoRA factors: they are derived for the \textit{average} generalization gap, calculated over random draws of the data and the random initialization of the $A$ matrix. Hence, the first question we aim to answer is:

\begin{equation}
\tag{Q1}
\textit{How rapidly does the \textbf{typical} LoRA generalization gap concentrate around its mean?}
\label{Q:Concentration}    
\end{equation}

Simply put, a positive answer would imply that the generalization gap trends hold whenever a practitioner runs a \textit{single} fine-tuning experiment with a specific model and a \textit{single} randomly chosen factor. Therefore, understanding the \textit{concentration} of the generalization performance for a \textit{specific} model around the reported average is crucial for assessing the robustness and predictability of the asymmetric LoRA method in real-world scenarios. A natural progression of inquiry leads to:
\begin{equation}
\tag{Q2}
\textit{What are the fundamental limits on the sample efficiency of asymmetric LoRA?}
\label{Q:LB}
\end{equation}

After establishing a better understanding of the generalization gap (as in \ref{Q:Concentration}), the next critical question is how much data the model needs to achieve that generalization: even strong generalization abilities are less useful if they require large amounts of training data. Thus, \ref{Q:LB} seeks to determine the fundamental limits on how efficiently asymmetric LoRA can learn in practice.

\paragraph{Contributions} We directly address the two critical open questions in the learning theory of LoRA:

\begin{itemize}
    \item Regarding \ref{Q:Concentration}, our main upper bound on the sample complexity of the asymmetric LoRA paradigm of~\cite{zhu2024asymmetry}, derived in Theorem~\ref{thrm:Main_UB}, shows that rank $r$ LoRAs achieve a sample complexity of $\tilde{\mathcal{O}}\left(\frac{\sqrt{r}}{\sqrt{N}}\right)$ (ignoring logarithmic terms), with high probability, from $N$ training samples. We deduce that although an increased rank $r$ may indeed increase expressiveness, as shown in~\cite{zeng2024the}, it does so at the cost of a wider generalization gap. Our worst-case theoretical guarantees, which hold uniformly over all fine-tuning tasks and all training algorithms, are verified experimentally in Section~\ref{s:Experiments}.
    \item We then turn our attention to~\ref{Q:LB}, by affirming in Theorem~\ref{thrm:LBs} that the typical sample complexity of $\mathcal{O}\left(\frac{1}{\sqrt{N}}\right)$ is indeed optimal in terms of \textit{sample size}. We do this by constructing a neural network with a random LoRA factor and a family of data-generating distributions achieving the upper bound in our first result; thus \textit{yielding a matching lower bound}.
\end{itemize}

We positively answer both~\ref{Q:Concentration} and~\ref{Q:LB}. The theoretical analysis presented in this paper relies on a new combination of techniques which are non-standard in learning theory.  These include techniques known in neural network approximation and random matrix theory for entirely different purposes.

\paragraph{Technical Contributions}
Our upper bound is derived from the recent tools in constructive approximation; namely, from the theory of Lipschitz widths~\citep{petrova2023lipschitz}, which, in~\cite{PetrovaWojtaszczyk_LimitationsMLPs_JMLR_2023}, derive sharp estimates on the local-Lipschitz regularity of the map sending parameters of a neural network to the network that those parameters realize in the space of continuous functions.  That Lipschitz constant is then estimated, with high probability, using random matrix theory results~\citep{gordon1992majorization}.  These estimates on the Lipschitz constant of the LoRA-factors-to-neural network map, together with classical estimates covering numbers on high-dimensional cubes (e.g.\ in~\cite{lorentz1996constructive}) and Dudley entropy-integral estimates, allow us to obtain our upper-bound on LoRA generalization. For the proof of our lower bound we utilize some known constructions in the approximation theory for neural networks where we emulate identity blocks with (randomized) MLP layers.  Upon doing so, we appeal to recent anti-concentration inequalities of~\cite{LittlewoodOffordProblem_VershyninRudelson_2008} of Littlewood–Offord-type, typical in modern random matrix theory; e.g.~\cite{tao2009inverse,tao2010sharp} for random variables on $[0,1]$, showing that our bound is tight. In either case, we integrate techniques from (universal) approximation theory of neural networks with learning theoretic and random matrix theoretic tools to derive our results.  This shows how not only the results, but also that techniques previously isolated within approximation theory can have learning theoretic applications.

\section{Related Works}
\label{s:Introduction_RelatedWorks}
\paragraph{A Zoo of LoRAs} Following its introduction, numerous LoRA variants have emerged, often aiming to further reduce computational overhead. Quantization, for example, offers a way to lower memory consumption both during training~\citep{gholami2021survey_quantization,Dettmers2023QLoRAEF,guo2024lqLoRA} and after~\citep{yadav2023compeft}. The number of trainable parameters can also be reduced through adaptive rank allocation~\citep{zhang2023adaLoRA}. Ideas around weight or projection reuse~\citep{frankle2018lottery,Ramanujan_2020_random_iccv} have further inspired strategies to decrease trainable LoRA parameters, such as learning diagonal rescaling of frozen random $B$ and $A$ matrices (VeRA)~\citep{kopiczko2024vera}, deriving $B$ and $A$ from the SVD of the pre-trained $W_0$ and optimizing a smaller matrix in the resulting space (SVDiff)~\citep{han2023svdiff}, learning a linear combination of fixed random matrices (NOLA)~\citep{koohpayegani2023nola}, and fine-tuning with orthogonal matrices (BOFT)~\citep{liu2024boft}. Furthermore, LoRA's applicability has recently expanded beyond classical LLM post-training and language. For example, it has been employed in the context of vision language models~\citep{li2023graphadapter} and vision Transformers~\citep{dong2023efficient}, image generative modeling for fast Stable Diffusion fine-tuning and personalization~\citep{rombach2022highresolutionimagesynthesislatent,gal2022image,ruiz2022dreambooth,roich2022pivotal}, for score distillation~\citep{wang2023prolificdreamer} (although more principled LoRA-free methods have also emerged recently~\citep{lukoianov2024score}), fine-tuning base models into reasoning models using reinforcement learning~\citep{wang2025tinatinyreasoningmodels}, and even in the development of new adapters for Graph Neural Networks and Graph Transformers~\citep{papageorgiou2025graph}.

\paragraph{Previous Theoretical Works} In terms of theoretical results, the \textit{approximation} properties of standard LoRAs have only recently come into focus, as seen in~\cite{zeng2024the}. In comparison, the statistical properties of LoRA are better understood, with PAC-Bayesian guarantees recently derived in~\cite{pmlr-v235-lotfi24a,liu2023pactuning,lei2024fast}, and guarantees in the infinite-width surrogate (NTK) limit, as detailed in~\cite{malladi2023kernel,jang2024LoRA}. Of particular interest for this paper is prior research indicating that freezing the $A$ matrix in standard (or vanilla) LoRA does not significantly affect performance~\citep{zhang2023LoRAfa}. Intriguingly, although almost all recent works initialize or freeze these two matrices asymmetrically, a rigorous investigation into the implications of this asymmetry in low-rank adaptation has only recently garnered theoretical attention. Finally, in spite of the fact that NTK-based analyses~\citep{malladi2023kernel,jang2024LoRA} provide insight into the training dynamics and loss landscape of LoRA models, their conclusions are limited to the asymptotic (infinite-width) setting: they cannot necessarily be transferred to the real-world finite-width scenarios of interest in this work.

\section{Main Statistical Guarantees}
\label{s:Main_Results}
\vspace{-.5em}
\paragraph{Setup}
We consider the generalization capacity of foundation models with low-rank randomized factors.  We consider the (random) class $\mathcal{F}$ of functions $f:\mathbb{R}^d\to \mathbb{R}^D$ admitting the representation:
\begin{equation}
\label{eq:LoRA_intro}
\setlength{\jot}{2pt} 
\begin{aligned}
        f(x)
    & \eqdef 
        (W^{(T+1)}+
        \underbrace{
            B^{(T+1)}A^{(T+1)}
        }_{\Delta W^{(T+1)}: \text{ LoRA Perturbation}}
        )x^{(T+1)} + b^{(T+1)}
\\
        x^{(t+1)}
    & \eqdef 
        \sigma\bullet 
        \big(
            (W^{(t)}+
            \underbrace{
                B^{(t)}A^{(t)}
            }_{\Delta W^{(t)}: \text{ LoRA Perturbation}}
            )x^{(t)} + b^{(t)}
        \big)
\\
    x^{(1)} & \eqdef x \in\R^d
\end{aligned}
\end{equation}
where $W^{(1)},\dots,W^{(T+1)}$ are pre-trained $d_{t+1}\times d_t$-dimensional weight matrices and $b^{(1)},\dots,b^{(T+1)}$ are pre-trained biases $d_{t+1}$-dimensional, $B^{(1)},\dots,B^{(T+1)}$ are $d_{t+1}\times r$ dimensional random (non-trainable) random LoRA factors, and $A^{(1)},\dots,A^{(T+1)}$ are $r\times d_t$ dimensions \textit{trainable LoRA factors}; here $d_1=d$, $d_2=d_3=\cdots=d_{T}=W$, and $d_{T+1}=D$ and $t=1,\dots,T$.
\vspace{-.3em}
\begin{remark}[Symmetry in Upper-Bound]
Our main upper-bound (Theorem~\ref{thrm:Main_UB}) remains valid if the $B$ factors are trainable and the $A$ factors are instead randomized, as in~\cite{zhu2024asymmetry}.  
\end{remark}
\vspace{-.3em}
We consider a $1$-Lipschitz loss function $\ell:\mathbb{R}^D\times \mathbb{R}^D\to [0,1]$ and i.i.d.\ training data $\{(X_n,Y_n)\}_{n=1}^N$ (independent of the \textit{random} LoRA factors $\{B^{(t)}\}_{t=1}^{(T+1)}$) drawn from a data-generating distribution $\mathbb{P}$ on $\mathbb{R}^d\times \mathbb{R}$.
Our objective is to compute a uniform generalization bound between the empirical risk $\mathcal{R}^N(f)$ and the true risk $\mathcal{R}(f)$ for any (random) learner $f$ in the (random) class $\mathcal{F}$; defined by
\[
        \mathcal{R}^N(f) \eqdef \frac1{N}\,\sum_{n=1}^N\, \ell(f(X_n),Y_n)
    \mbox{ and }
        \mathcal{R}(f) \eqdef \mathbb{E}_{(X,Y)\sim \mathbb{P}}\big[ \ell(f(X),Y)\big]
.
\]
We emphasize that, unlike classical learning theoretic results, our function class $\mathcal{F}$ is random.
We want to derive a high probability bound on the worst-case, uniformly over all data-generating distributions and all training algorithms mapping training data to a learner.  This randomized generalization gap is
\begin{equation}
\label{eq:rand_gen}
\smash{
        \mathbf{G}
    \eqdef
        \operatorname{sup}_{f\in \mathcal{F}}\,
            \big|
                \mathcal{R}(f)
                -
                \mathcal{R}^N(f)
            \big|
.
}
\end{equation}
We highlight that, unlike classical PAC-learning guarantees, the true risk is a random object, as it depends on the randomness in the LoRA factors, rather than being deterministic.

\vspace{-1em}
\paragraph{Preliminaries and Notation}
Before stating our main results, we must first rigorously define our randomized learners with random LoRA factors expressed in~\eqref{eq:LoRA_intro} at a high level.  
\paragraph{Parametric Perturbations of Multi-Layer Perceptrons}
Fix dimensions $d,D\in \mathbb{N}_+$, depth and a width parameters $T,W\in \mathbb{N}_+$, respectively, and an MLP $\hat{f}:\mathbb{R}^d\to \mathbb{R}^D$ with representation
\begin{equation}
\label{eq:MLP_Representation}
\begin{aligned}
        \hat{f}(x|\theta)
    & \eqdef 
        W^{(T+1)}x^{(T+1)} + b^{(T+1)}
\\
        x^{(t+1)}
    & \eqdef 
        \sigma\bullet (W^{(t)}x^{(t)} + b^{(t)}) \qquad \mbox{for } t=1,\dots,T
\\
    & x^{(1)}\eqdef x
\end{aligned}
\end{equation}
where $\theta = ((W^{(t)},b^{(t)}))_{t=1}^{(T+1)}\in \mathbb{R}^p$ where $p=(T-1)\, W(W+1) + (d+1)W + (D+1)W$, $W^{(t)}\in \mathbb{R}^{d_{t+1}\times d_t}$ and $b^{(t)}\in \mathbb{R}^{d_t}$ where $d_t=W$ if $t\in \{2,\dots,T\}$, $d_1=d$, and $d_{T+1}=D$.  
\hfill\\
\noindent 
We fix a parameter $\theta_{pre}\eqdef (W^{(t)},b^{(t)})_{t=1}^{(T+1)} \in \mathbb{R}^p$.  
We defined the \textit{perturbed representation map}
\begin{equation}
\label{eq:Perturbation_Map}
\begin{aligned}
     P_{T,W}^{\theta_{pre}}:\mathbb{R}^p& \rightarrow C([0,1]^d,\mathbb{R}^D)\\
        \theta
    & \mapsto 
        \hat{f}(\cdot|\theta+\theta_{pre})
.
\end{aligned}
\end{equation}
\vspace{-.5em}
\paragraph{LoRAs with Random Factors}
Fix a probability space $(\Omega,\mathcal{A},\mathbb{P})$; all random quantities will be defined thereon.
Fix random matrices $A_1,\dots,A_{T+1}$ of dimensions $W\times r$ for $l\le T$ and $D\times r$ for $l=T+1$.  
We fix maximum admissible LoRA weight sizes $M\ge 0$.
We then define the (random) \textbf{parameter-to-LoRA map}, which maps low-rank matrices $C_1,\dots,C_{T+1}$ and an event $\omega\in \Omega$ (random initialization of non-trainable LoRA parameters $A^{t}$) to the sequences of matrices:
\begin{equation}
\label{eq:LoRA}
\begin{aligned}
    \operatorname{LoRA}: 
        \Omega \times [-M,M]^{q} 
    & \rightarrow
        C([0,1]^d,\mathbb{R}^D)
    \\
        \big(\omega,(A^{(t)})_t\big)
    & \mapsto
        P_{T,W}^{\theta_{pre}}\big(
            (B^{(t)}(\omega)A^{(t)})_{t=1}^{(T+1)}
        \big)
    \\
    & =
        \hat{f}(\cdot| (B^{(t)}(\omega)A^{(t)})_{t=1}^{(T+1)} + \vartheta_{pre})
\end{aligned}
\end{equation}
where the effective number of trainable LoRA parameters is defined to be
\begin{equation}
\label{eq:effective_dimension_LoRA}
    q\eqdef r(W(T-1)+d+D)
    \in 
    \mathcal{O}(r)
.
\end{equation}

\paragraph{Notation} We now close our preliminaries section with the following table
aggregating the notation used in formulating our main result within the main body of our paper.

\begin{table}[H]
\renewcommand{\arraystretch}{0.9} 
\adjustbox{max width=\textwidth}{%
\centering
    \begin{tabular}{ll}
    \toprule
    \textbf{Symbol} & \textbf{Meaning} \\
    \midrule
    $d$      &     input dimension    \\
    $D$     &    output dimension \\
    $T$        & network depth\\
    $W$      &     pre-trained network width    \\
    $r$     &    LoRA width \\
    $p=W(TW-W+T+d+D+1)$      &     number of pre-trained parameters $\theta^\text{pre}=(W^{(t)},b^{(t)})_t$  \\
    $q=r(TW-W+d+D)$     &    number of trainable LoRA parameters $(A^{(t)})_t$ \\
    $M$ & maximum admissible values for trainable LoRA parameters $(A^{(t)})_t$\\
    $\ell$ & $L_{\ell}\ge 0$ Lipschitz classification-loss from $\mathbb{R}^D\times \mathbb{R}^D$ to $[0,1]$ \\
    \bottomrule
    \end{tabular}
}
\caption{Notation used in the main body of our manuscript.}
\label{tab:Notation}
\end{table}
\vspace{-2em}
Henceforth, we always operate in the following setting:
\begin{setting}
\label{setting}
Fix $W,T,r,d,D\in \mathbb{N}_+$ with $1\le r<W$.  We fix a continuous activation function $\sigma:\mathbb{R}\to \mathbb{R}$ which is either a bounded Lipschitz or the ReLU activation function. We assume there is a constant $M>0$ such that $A^{(t)}_{ij}<M$ for all $t,i,j$ throughout the LoRA training.
\end{setting}
\vspace{-.5em}
\subsection{Main Upper-Bound on LoRA Generalization Gap}
\label{s:Main_Results__ss:UpperBound}
\vspace{-.5em}
Our main theorem is the following randomized generalization bound.
\begin{theorem}[LoRA Sample Complexity: Upper-Bound]
\label{thrm:Main_UB}
In setting~\ref{setting}: for every failure probability $0<\delta\le 1$, the following holds with probability at least $1-\delta$
\[
        \mathbf{G} 
    \leq 
        4\min \left\{  1, 
        \sqrt{q}
        \frac{6\sqrt{A}}{\sqrt{N}}  \right\} + \sqrt{\frac{8\log (2/(1-\sqrt{1-\delta})}{N}}
\]
where $A=(cT+1)\log(2R+2R_0)$, $R=M\nu\sqrt{2r\log(2W/\epsilon)}$ and $R_0=\|\theta_\text{pre}\|_\infty$.
\end{theorem}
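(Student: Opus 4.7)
The strategy is a conditional-on-factors generalization argument. I will condition on the realization of the random LoRA factors $(B^{(t)})_t$, derive the generalization bound conditionally via a Dudley entropy integral, and absorb the remaining randomness in the factors via a high-probability operator-norm estimate. Split the failure probability as $\delta = 1-(1-\delta_0)^2$ with $\delta_0 \eqdef 1-\sqrt{1-\delta}$, allocating $\delta_0$ to a ``good factors'' event $E_1$ on the draw of $(B^{(t)})_t$ and $\delta_0$ to a ``good data'' event $E_2$; these events are independent because training data is independent of the LoRA factors.

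\textbf{Controlling the random factors.} Apply Gordon's comparison inequality~\citep{gordon1992majorization} (equivalently, an $\varepsilon$-net plus Gaussian concentration argument) layer-by-layer and union-bound over the $T+1$ layers to produce $E_1$, on which $\max_t \|B^{(t)}\|_{\mathrm{op}} \le \nu\sqrt{2r\log(2W/\epsilon)}$ for a covering scale $\epsilon$ fixed later. Combined with the cube constraint $\|A^{(t)}\|_\infty \le M$, this gives a uniform bound $\|B^{(t)}A^{(t)}\|_{\mathrm{op}} \le R$ on all LoRA perturbations, with $R$ as in the statement.

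\textbf{Lipschitz regularity and Dudley entropy.} On $E_1$, I estimate the Lipschitz constant $L$ of the parameter-to-network map $\operatorname{LoRA}(\omega,\cdot):[-M,M]^q \to C([0,1]^d,\mathbb{R}^D)$ (with $\ell^\infty$ on the domain and sup norm on the codomain) by backward-propagating a perturbation in $A^{(t)}$ through the MLP, using the chain rule and the Lipschitz property of $\sigma$, in the spirit of the Lipschitz-width bookkeeping of~\cite{PetrovaWojtaszczyk_LimitationsMLPs_JMLR_2023}. Each layer contributes a factor proportional to $R+R_0$ and the injected perturbation $B^{(t)}\delta A^{(t)}$ contributes a further $\|B^{(t)}\|_{\mathrm{op}}$, yielding $\log L \le c\,T\log(2R+2R_0)$, hence $\log(3ML) \le A$. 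Combined with the standard cube estimate $\log \mathcal{N}(\varepsilon, [-M,M]^q, \ell^\infty) \le q\log(3M/\varepsilon)$~\citep{lorentz1996constructive}, this gives $\log \mathcal{N}(\varepsilon, \ell\circ \mathcal{F}, \|\cdot\|_\infty) \le q\log(3ML/\varepsilon)$, since postcomposition by the $1$-Lipschitz, $[0,1]$-valued loss preserves the bound. Symmetrization followed by Dudley's entropy integral on the Rademacher complexity, conditional on $E_1$, produces $\mathbb{E}[\mathbf{G}\mid (B^{(t)})_t] \le \min\{1,\, 6\sqrt{qA}/\sqrt{N}\}$ (the trivial $\mathbf{G}\le 1$ supplies the other branch of the $\min$, and the constant $6$ arises from the standard Dudley constant after simplifying the integral using the explicit scale $\varepsilon$).

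\textbf{Concentration, assembly, and main obstacle.} Conditionally on $(B^{(t)})_t$, swapping one training sample changes $\mathbf{G}$ by at most $2/N$ (since $\ell\in[0,1]$), so McDiarmid's inequality yields $\mathbf{G} \le \mathbb{E}[\mathbf{G}\mid (B^{(t)})_t] + \sqrt{8\log(2/\delta_0)/N}$ on an event $E_2$ of probability at least $1-\delta_0$ over the data. Intersecting $E_1$ and $E_2$ and replacing $\delta_0$ by $1-\sqrt{1-\delta}$ delivers the stated bound with probability at least $1-\delta$. The main obstacle is the Lipschitz step: a naive chain-rule estimate would produce a multiplicative $(R+R_0)^T$ factor outside the logarithm, whereas only the careful Petrova--Wojtaszczyk bookkeeping, combined with the spectral control of $\|B^{(t)}\|_{\mathrm{op}}$ from the random-matrix step, keeps the $T$-dependence inside the logarithm $A$. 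A secondary subtlety is choosing $\epsilon$ so that the two sources of error balance at the $\Theta(N^{-1/2})$ rate, which is what makes the union bound from the random-factor step and the Dudley estimate produce matching constants.
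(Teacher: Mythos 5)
Your overall architecture matches the paper's: condition on the random factors, control the Lipschitz constant of the parameter-to-network map via \cite{PetrovaWojtaszczyk_LimitationsMLPs_JMLR_2023}, feed a covering estimate for the parameter cube into a Dudley entropy integral, apply a bounded-differences concentration step for the data, and combine the two failure probabilities via $\delta_0 = 1-\sqrt{1-\delta}$. All of that mirrors Lemmata~\ref{lem:Lipschitz_bound_LoRA}, \ref{lem:Covering_FunctionSpace}, \ref{lem:Bound_t:CondLipschitz} and the concluding probability calculation in Section~\ref{s:Proof_Main_UB__ss:Proof___sss:PuttingTogether}. Your McDiarmid step is the content of the cited Bartlett--Mendelson bound, so that difference is cosmetic.

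The genuine gap is in your random-matrix step. You claim Gordon's comparison (or an $\varepsilon$-net argument) gives $\max_t\|B^{(t)}\|_{\mathrm{op}}\le \nu\sqrt{2r\log(2W/\epsilon)}$ on an event of probability $1-\delta_0$, and then that $\|B^{(t)}A^{(t)}\|_{\mathrm{op}}\le R$ with $R=M\nu\sqrt{2r\log(2W/\epsilon)}$. This does not work dimensionally: for a $W\times r$ Gaussian matrix with entries $\mathcal N(0,\nu^2)$ and $r\ll W$ (as the paper assumes, $r<W$), the operator norm concentrates around $\nu(\sqrt{W}+\sqrt{r})\approx\nu\sqrt{W}$, which is much larger than $\nu\sqrt{2r\log(2W/\epsilon)}$. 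Moreover, combining an operator-norm bound on $B^{(t)}$ with the constraint $\|A^{(t)}\|_\infty\le M$ requires $\|A^{(t)}\|_{\mathrm{op}}$, which can be as large as $\sqrt{r d_t}\,M$, not $M$. Either way you do not recover the stated $R$. The paper avoids both problems by working entrywise rather than spectrally: each row sum $\sum_{k=1}^r B^{(t)}_{ik}$ is $\mathcal N(0,r\nu^2)$, a Gaussian tail bound plus a union bound over the $W$ rows and $T+1$ layers gives $\max_{t,i}|\sum_k B^{(t)}_{ik}|\le\nu\sqrt{2r\log(2W/\epsilon)}$, and this is then multiplied by the entrywise bound $M$ on $A^{(t)}$ to obtain $\|B^{(t)}A^{(t)}\|_\infty\le R$. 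Since the downstream Petrova--Wojtaszczyk estimate is stated with respect to $\|\cdot\|_\infty$-balls in parameter space, the entrywise bound is exactly the quantity the Lipschitz step needs, whereas your spectral bound would require an additional (and lossy) norm conversion. You should replace the operator-norm/Gordon step with this elementary per-row Gaussian concentration; Gordon's theorem is only used in the paper's \emph{lower} bound (Theorem~\ref{thrm:LBs}), where a lower bound on $s_{\min}(B^{(t)})$ is genuinely what is needed.
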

{
\begin{remark}[Symmetry in the Upper Bound - Randomizing $B$ vs.\ $A$]
\label{rem:Symmetry_BA}
In~\cite{zhu2024asymmetry}, the authors train $B$ and randomize-then-freeze $A$ during training. Our upper bound in Theorem~\ref{thrm:Main_UB} applies equally to this setup, and the reader may draw analogous conclusions if the roles of $A$ and $B$ are reversed.  This is uncovered within the details of our proof of Theorem~\ref{thrm:Main_UB}. 
\end{remark}
}

As a function of $N$, our main generalization bounds for \textit{random} LoRAs are generally tight and cannot be improved. Our second supporting result confirms this by exhibiting a data-generating distribution for a classification problem where our bound is sharp \textit{over the random LoRA class}, assuming the random LoRA factors are standard Gaussian matrices as in~\cite{zhu2024asymmetry}.

\subsection{Matching Lower-Bound on LoRA Generalization Gap}
\label{s:Main_Results__ss:LowerBound}
The following is a simple \textit{sufficient condition} which is enough to demonstrate the optimality of the rate in sample complexity estimate (Theorem~\ref{thrm:Main_UB}), as a function of the same size $N$.  We underscore that the identification of necessary conditions for optimality of our rate, in the sample size $N$, is an interesting question in its own right; yet it is tangential to our paper's primary objective.
\begin{assumption}[A Sufficient Condition of Optimality]
\label{ass:BreakingDistribution}
Let $d=1$, $\mathbb{P}=\mathbb{P}_X\otimes \delta_0$, such that the ``sampling distribution'' $\mathbb{P}_X$ is supported on $[0,1]$ such that, if $X\sim \mathbb{P}_X$, has mean and variance $1/2$. 
\end{assumption}
Assumption~\ref{ass:BreakingDistribution} is non-vacuous, with a simple example establishing existence being a standard fair Bernoulli trial.  Though, naturally, there is a plethora of more involved examples one can easily construct by dabbling.
\begin{example}[Bernoulli Trial]
\label{ex:breaking} 
Any fair Bernoulli trial $\mathbb{P}_X(X=1)=\mathbb{P}_X(X=0)=1/2$ satisfies Assumption~\ref{ass:BreakingDistribution}.  Moreover, $\xi\eqdef 2X -2$ is a Rademacher random variable (random sign).
\end{example}
\begin{theorem}[{Optimality of the Rate In The Sample Size $\Theta(N)$}]
\label{thrm:LBs}
Let $d_0(=d)=d_{T+1}\eqdef 1$.
Consider the loss function $\ell:\mathbb{R}^2\to [0,\infty)$ defined by $\ell(\hat{y},y)\eqdef |\hat{y}-y|$ and suppose that each MLP in the class $\mathcal{F}$ represented by~\eqref{eq:MLP_Representation} satisfies the following minimum width requirement on its hidden layers
\begin{equation}
\label{eq:dimension_condition}
        \eta_{\star}
    \eqdef 
        \min_{t=1,\dots,T}
            \sqrt{d_{t+1}}
            -
            \sqrt{r}
        >
            0
.
\end{equation}
Given any data-generating probability distribution $\mathbb{P}$ satisfying Assumption~\ref{ass:BreakingDistribution},  if the entries of the random LoRA factors $\{B^{(t)}\}_{t=0}^T$ are i.i.d.\ standard normal then: there is an absolute constant $c>0$ such that for every $
2(T+1)e^{-c\eta^2}<\delta < 2(T+1) 
$ there is an $M>0$ (required for the LoRA parameterization~\eqref{eq:LoRA}) large enough such that 
\vspace{-1em}
\[
    \mathbb{P}\biggl(
            \sup_{f\in \mathcal{F}}\, \big|\mathcal{R}(f) - \mathcal{R}^N(f)\big|
        >
            \frac{1}{N}
    \biggr)
    \ge 
        \overbrace{
            \Big(
                1- \frac{\delta}{2}
            \Big)
        }^{
        \overset{
            \text{Prob. Sampling}
            }{\text{\tiny Good LoRA Weights}}
        }
        \,
        \overbrace{
            \Big(
                1- \Theta\Big(\frac{1}{\sqrt{N}}
                \Big)
            \Big)
        }^{
        \overset{
            \text{Sample Complexity}
            }{\text{\tiny (in $N$)}}
        }
.
\]
Moreover, ``large enough'' value of $M$ means that $M\in \Theta\Big(
1/\big(\sqrt{d_1}-\sqrt{d_{T+1}} - \ln(2T/\delta)\big)
\Big)$.
\end{theorem}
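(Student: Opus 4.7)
The plan is to exhibit one concrete function $f^\star\in\mathcal{F}$ whose empirical and population risks differ, with the stated probability, by more than $1/N$, thereby forcing the supremum in the definition of $\mathbf{G}$ to exceed $1/N$. Under Assumption~\ref{ass:BreakingDistribution} we have $Y=0$ almost surely, so $\ell(f(X),Y)=|f(X)|$ for every candidate $f$; if $f^\star$ agrees with the identity map on $[0,1]$, then, since $X\in[0,1]$ almost surely,
\[
\mathcal{R}(f^\star)-\mathcal{R}^N(f^\star)=\mathbb{E}[X]-\frac{1}{N}\sum_{n=1}^N X_n=\tfrac{1}{2}-\bar{X}_N.
\]
Because $\bar{X}_N$ is supported on the lattice $\{k/N:0\le k\le N\}$, a local CLT / Littlewood--Offord-type anti-concentration estimate of the kind used in~\cite{LittlewoodOffordProblem_VershyninRudelson_2008} yields $\mathbb{P}\bigl(|\tfrac{1}{2}-\bar{X}_N|\le 1/N\bigr)=\Theta(1/\sqrt{N})$, which supplies the ``sample complexity'' factor of $1-\Theta(1/\sqrt{N})$ in the theorem.

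The substantive work is to show that, with probability at least $1-\delta/2$ over the random Gaussian factors $\{B^{(t)}\}_t$, there exist LoRA parameters $(A^{(t)})_{t=1}^{T+1}$ with $\max_{t,i,j}|A^{(t)}_{ij}|\le M$ such that the LoRA-perturbed network realizes the identity on $[0,1]$. I would follow the approximation-theoretic recipe of emulating identity blocks inside a fixed MLP: using $x=\sigma(x)-\sigma(-x)$ for ReLU (or the analogous device for bounded-Lipschitz $\sigma$), it suffices to prescribe target weights at each layer that route a scalar signal through a pair of active neurons while the remaining $d_{t+1}-2$ neurons are silent. This reduces the construction to prescribing a specific perturbation $\Delta^{(t)}\in\mathbb{R}^{d_{t+1}\times d_t}$ at each layer which the LoRA factorization must hit, namely $B^{(t)}A^{(t)}=\Delta^{(t)}$.

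The random matrix step is then to certify that, with the stated probability, each $B^{(t)}$ admits a preimage $A^{(t)}$ whose $\ell_\infty$ norm is at most $M$. I would invoke Gordon's inequality~\citep{gordon1992majorization} (already used in the paper's upper bound) to obtain
\[
s_r(B^{(t)})\ge \sqrt{d_{t+1}}-\sqrt{r}-s \qquad \text{with probability } \ge 1-2e^{-cs^2},
\]
which is exactly where the hypothesis $\eta_\star=\min_t(\sqrt{d_{t+1}}-\sqrt{r})>0$ is needed. Choosing $s\asymp\sqrt{\log(T/\delta)}$ and union-bounding across the $T+1$ layers produces a simultaneous lower bound $s_r(B^{(t)})\gtrsim\eta_\star-c\sqrt{\log(T/\delta)}$ on an event of probability at least $1-\delta/2$, which converts through a pseudo-inverse $A^{(t)}=(B^{(t)})^\dagger\Delta^{(t)}$ into the required $\ell_\infty$ bound provided $M\in\Theta\bigl(1/(\eta_\star-c\sqrt{\log(T/\delta)})\bigr)$, matching the scaling asserted in the theorem. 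Independence of $\{B^{(t)}\}_t$ and the training sample then turns the two probability estimates into the product form of the conclusion.

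The main obstacle I anticipate is the identity-emulation step itself: one must prescribe $\Delta^{(t)}$ so that (i) the induced MLP is (to accuracy $o(1/N)$) the identity on $[0,1]$ through the activation $\sigma$, and (ii) the pseudo-inverse image has small $\ell_\infty$ norm uniformly over layers. Converting an operator-norm bound from Gordon's inequality into an entrywise bound on $A^{(t)}$ requires $\Delta^{(t)}$ to be low-rank and concentrated on few rows and columns, so that the least-norm solution is itself sparse. Calibrating the approximation accuracy, the singular-value slack, and the size of $M$ against the $1/N$ threshold is the delicate balancing act at the heart of the argument.
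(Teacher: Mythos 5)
Your proposal follows the paper's proof in all essential respects: construct a network $f^\star\in\mathcal{F}$ realizing the identity on $[0,1]$ by choosing $A^{(t)}=(B^{(t)})^{\dagger}\Delta^{(t)}$ for an explicit layer target $\Delta^{(t)}$, control the resulting parameter sizes via Gordon's theorem on $s_{\min}(B^{(t)})$ plus a union bound over the $T+1$ layers to obtain the $1-\delta/2$ factor, and invoke a Littlewood--Offord anti-concentration estimate for $\mathbb{P}\bigl(|\bar{X}_N-\tfrac12|\le 1/N\bigr)$ to obtain the $1-\Theta(1/\sqrt{N})$ factor. Two places where you over-anticipate difficulty that the paper avoids more simply: (a) no pair-of-neurons $\sigma(x)-\sigma(-x)$ emulation, and no approximation to accuracy $o(1/N)$, is required---the paper takes the per-layer target to be $\bar{I}_{d_{t+1},d_t}=I_{d_t}\oplus\mathbf{0}$, and because the data are supported in $[0,1]$ and this target keeps the forward signal componentwise non-negative, ReLU acts as the literal identity along the path, so $\hat{f}(x)=x$ holds exactly; (b) converting Gordon's operator-norm control into the entrywise bound demanded by Setting~\ref{setting} is automatic, since $|A^{(t)}_{ij}|=|e_i^{\top}A^{(t)}e_j|\le\|A^{(t)}\|_{op}$ for any matrix, so no sparsity engineering of $\Delta^{(t)}$ or least-norm-solution argument is needed; the paper's admissibility event $\mathcal{A}$ is stated directly in operator norm and that already suffices. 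With those two simplifications the ``delicate balancing act'' you flag disappears, and the remaining steps line up with the paper's Step~1 through Step~4.
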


{
\begin{remark}[The Asymmetry in the Lower Bound - Randomizing $B$ vs.\ $A$]
\label{rem:Assymetries_BA}
In contrast to Remark~\ref{rem:Symmetry_BA}, our matching lower bound in Theorem~\ref{thrm:LBs} critically relies on the randomization of the extractor $B$. This is because $B$ tends to be invertible with high probability, by Gordon’s Theorem (see, e.g.,~\citep[Exercise 7.3.4]{vershynin2010introduction}); in essence, $\eta^{\star}$ in~\eqref{eq:dimension_condition} \textit{can} be bounded away from $0$.  
\hfill\\
In contrast, randomizing the feature projector $A$ yields only a vacuous lower bound on its smallest singular value via Gordon’s Theorem, preventing any conclusion about training $B$ ``cancelling out'' $A$; in essence, $\eta^{\star}$ in~\eqref{eq:dimension_condition} \textit{cannot} be bounded away from $0$.
\end{remark}
}
We now provide an overview of the core ideas behind the proof of our upper bound.  All additional details are relegated to our paper's appendix and the proof of the lower bound.
\vspace{-.5em}
\subsection{Implications for LoRA End-Users}
\label{s:Practical_Implications}
\vspace{-.5em}
It is precisely Lemma~\ref{lem:Lipschitz_bound_LoRA} that suggests why LoRAs with a single random factor generalize better than LoRAs with both trainable factors.    Namely, if both $A$ and $B$, in~\eqref{eq:LoRA_def}, are trainable then the Lipschitz constant of the \textit{parameter-to-LoRA} map is defined as the composition of the map 
\begin{equation}
\label{eq:why_yesyes}
\smash{
    B\mapsto AB
}
\end{equation}
and the map sends a set of neural network parameters to its realization.  If only $B$ is trainable, then the map~\eqref{eq:why_yesyes} is {\color{green}{\textbf{linear}}} meaning that its derivative is of a \textit{constant} order on the scale of the largest singular value of $A$.  Now, if \textit{both} and $A$ and $B$ are trainable then the parameter-to-LoRA map is instead pre-composed by
\begin{equation}
\label{eq:why_nono}
\smash{
    (A,B)\mapsto AB
}
\end{equation}
which is {\color{red}{\textbf{quadratic}}}.  Thus, its derivative \textit{grows linearly} and is, in particular, unbounded.  Consequently, the resulting Lipschitz constant of the parameter-to-LoRA map~\eqref{eq:LoRA_def} would be significantly larger.  Since every downstream estimate in Lemmata~\ref{lem:Covering_FunctionSpace}-\ref{lem:Bound_t:CondLipschitz} scales as a function of this constant, then training both factors seems to yield significantly larger covering numbers of the class of (random) LoRA learners and consequently larger (random) generalization bounds. 

This aligns with the information-theoretic bounds derived in~\citep{zhu2024asymmetry} (Lemma 4.5) and ultimately reinforces the practical guideline that, given a parameter budget, freezing $A$ and doubling the rank while fine-tuning only $B$ is preferable to distributing trainable parameters across lower-rank $B$ and $A$ matrices. Note that while this argument is technically symmetric in $A$ and $B$ (both in terms of the Lipschitz constant reasoning and the results in~\citep{zhu2024asymmetry}), the parameter budget is typically allocated to the $B$ matrix due to its role as a feature projector.

\vspace{-1em}
\section{Explanation via Proof Strategy for Upper-Bound}
\label{s:Main_Results__ss:Explanation}
We recall that for any $\varepsilon>0$ the covering number of a, defined rigorously in~\ref{s:Proof_Main_UB__ss:AuxiliaryDefinitions}, of a set if a metric space is simply the number of balls of radius $\varepsilon$ required to cover every point in that set.   
We begin by first quantifying the generalization bounds of MLPs induced from their parameter spaces.  Though this may yield loose bounds in general, due to inherent symmetries of MLPs\footnote{E.g. $(-1,1)^{\top}\operatorname{ReLU}((1,-1)x)=(1,-1)^{\top}\operatorname{ReLU}((-1,1)x)$.} this seems to be the most natural approach for LoRAs, which operate directly on the MLP parameter space.
\vspace{-.5em}
\paragraph{Step 1 - Probabilistic Bounds on Lipschitz Regularity of Randomized LoRA Class}

We begin by controlling the \textit{random} Lipschitz constant of our \textit{random} parameter-to-LoRA map, defined in~\ref{eq:LoRA}.  

\begin{lemma}[{High Probability Bound on Lipschitz Regularity of Parameter-to-LoRA Map~\eqref{eq:LoRA}}]
\label{lem:Lipschitz_bound_LoRA:main}
In Setting~\ref{setting}.
Then with probability at least $1-\epsilon$ (over the random initialization on $(B^{(t)})_t$), the Lipschitz constant $L_{LoRA}^{\theta_{pre}}$ of the (random) LoRA map in~\eqref{eq:LoRA} is at-most 
\vspace{-.5em}
\[
        L_{LoRA}^{\theta_{pre}}
    \le 
        2^{c_2T}\big(
        \underbrace{M\nu\sqrt{2r\log(2W/\epsilon)}}_{R}
        +
        \underbrace{\|\theta_{pre}\|_{\infty}}_{R_0}
    \big)^{c_2T}.
\]
\vspace{-1em}
where $\nu>0$ is the random initialization scale of $B^{(t)}_{ij}\sim\mathcal{N}(0,\nu^2)$.
\end{lemma}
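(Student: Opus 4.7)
The plan is to factor the random LoRA map as the composition of two simpler maps and to bound the Lipschitz constant of each piece separately. Writing $\Phi_\omega: (A^{(t)})_t \mapsto (B^{(t)}(\omega)A^{(t)})_t + \theta_{pre}$ and letting $\Psi: \theta \mapsto \hat{f}(\cdot | \theta)$ denote the MLP parameter-to-realization map from~\eqref{eq:MLP_Representation}, one has $\operatorname{LoRA}(\omega, \cdot) = \Psi \circ \Phi_\omega$, and by the chain rule
$$\operatorname{Lip}(\operatorname{LoRA}(\omega, \cdot)) \le \operatorname{Lip}(\Phi_\omega) \cdot \operatorname{Lip}(\Psi|_{\operatorname{Image}(\Phi_\omega)}).$$

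First, I would control the image of $\Phi_\omega$ in sup-norm. Each entry $(B^{(t)}A^{(t)})_{ij} = \sum_k B^{(t)}_{ik} A^{(t)}_{kj}$ is, conditionally on $A^{(t)}$, a centered Gaussian with variance at most $rM^2\nu^2$, using $|A^{(t)}_{kj}| \le M$ and the independence of the $B^{(t)}_{ik} \sim \mathcal{N}(0, \nu^2)$. Standard sub-Gaussian tail bounds, combined with a union bound over all $\mathcal{O}(W^2 T)$ entries across layers, yield $\max_{t,i,j} |(B^{(t)}A^{(t)})_{ij}| \le R := M\nu\sqrt{2r\log(2W/\epsilon)}$ with probability at least $1 - \epsilon$. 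On this event, every $\theta \in \operatorname{Image}(\Phi_\omega)$ satisfies $\|\theta\|_\infty \le R + R_0$.

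Next, I would bound $\operatorname{Lip}(\Phi_\omega)$, which, as an affine map, is the operator norm of the linear part $A^{(t)} \mapsto B^{(t)} A^{(t)}$. A single application of Gordon's theorem \citep{gordon1992majorization} controls $\max_t \|B^{(t)}\|_{op}$ on the same high-probability event, producing a moderate multiplicative factor that can be absorbed into the constant $c_2$ below. The key analytic ingredient is the parameter-to-realization Lipschitz estimate for MLPs from~\cite{PetrovaWojtaszczyk_LimitationsMLPs_JMLR_2023}: for an MLP of depth $T$ with either bounded Lipschitz or ReLU activation, the restriction of $\Psi$ to parameters of sup-norm at most $\rho$ has Lipschitz constant bounded by $(2\rho)^{c_2 T}$, for an absolute constant $c_2$ depending only on $\sigma$. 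Applied with $\rho = R + R_0$, this gives $\operatorname{Lip}(\Psi|_{\operatorname{Image}(\Phi_\omega)}) \le (2(R + R_0))^{c_2 T}$.

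Combining the three bounds via the chain rule on the intersection of the above events (whose probability is still at least $1 - \epsilon$ after minor adjustments of constants) delivers the claimed bound $L_{LoRA}^{\theta_{pre}} \le 2^{c_2 T}(R + R_0)^{c_2 T}$. The main obstacle is careful bookkeeping of the union bounds so that the final probability really is $1 - \epsilon$ with $R$ in the stated form, and ensuring that the operator-norm factor arising from $\operatorname{Lip}(\Phi_\omega)$ is genuinely absorbed into the exponential term rather than producing an extra $\sqrt{W}$ factor outside. Handling the ReLU case, in which $\Psi$ is only almost-everywhere differentiable, rigorously requires a standard limiting argument via Lipschitz smoothings of the parameter-to-realization map.
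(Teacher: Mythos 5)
Your decomposition of the LoRA map as $\Psi\circ\Phi_\omega$ with $\operatorname{Lip}(\operatorname{LoRA}(\omega,\cdot))\le \operatorname{Lip}(\Phi_\omega)\cdot\operatorname{Lip}(\Psi|_{\operatorname{Image}\Phi_\omega})$ is the natural reading of the lemma as stated, and your treatment of $\operatorname{Image}(\Phi_\omega)$ (conditioning on $A$ so that $\sum_k B^{(t)}_{ik}A^{(t)}_{kj}$ is a centered Gaussian of variance $\le rM^2\nu^2$, then a union bound over entries) is cleaner than the paper's Step~2, whose displayed inequality $|\sum_k B^{(t)}_{ik}A^{(t)}_{kj}|\le M|\sum_k B^{(t)}_{ik}|$ is only valid entrywise after taking absolute values; your version is the one that actually yields $R=M\nu\sqrt{2r\log(2W/\epsilon)}$ at the stated scaling. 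The application of the Petrova--Wojtaszczyk estimate to $\Psi$ on the ball $\|\theta\|_\infty\le R+R_0$ is also exactly right.

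The gap is the one you flag at the end but do not resolve: the factor $\operatorname{Lip}(\Phi_\omega)=\max_t\|B^{(t)}\|_{op}$ cannot be ``absorbed into $c_2$.'' The constant $c_2$ from~\cite{PetrovaWojtaszczyk_LimitationsMLPs_JMLR_2023} is fixed by $(d,T,W)$ and sits in the exponent, while $\|B^{(t)}\|_{op}$ is a random, $\omega$-dependent multiplicative prefactor of order $\nu(\sqrt{W}+\sqrt{r})$ with high probability (Gordon/Davidson--Szarek), so your route gives a bound of the form $\big(\max_t\|B^{(t)}\|_{op}\big)\cdot 2^{c_2T}(R+R_0)^{c_2T}$ with an extra $\Theta(\nu\sqrt{W})$ factor that simply is not in the stated conclusion. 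As written, your proof therefore establishes a weaker inequality than the lemma claims.

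The paper's own proof sidesteps this entirely by never computing $\operatorname{Lip}(\Phi_\omega)$: it interprets $L_{LoRA}^{\theta_{pre}}$ not as the Lipschitz constant of the composite in the $A$-variables but as the Lipschitz constant of the raw parameter-to-realization map $P_{T,W}^{\theta_{pre}}$ restricted to $B_\infty(R)$, the ball into which the perturbations $B^{(t)}A^{(t)}$ land with probability $1-\epsilon$. Step~2 of the paper's proof only serves to identify the radius $R$ of that ball, and Steps~1 and~3 then apply~\eqref{eq:bound_LipschitzConstantReLUParmaeterization} directly on $B_\infty(R)$, with no chain-rule factor at all. The downstream covering argument (Lemma~\ref{lem:Covering_FunctionSpace}) is set up to consume precisely this quantity. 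So the discrepancy you ran into is real, but it points at an imprecision in the lemma's phrasing rather than at a missing trick: what is actually proved, and what is actually needed, is $\operatorname{Lip}\big(P_{T,W}^{\theta_{pre}}\big|_{B_\infty(R)}\big)$, not $\operatorname{Lip}(\operatorname{LoRA}(\omega,\cdot))$. If you want to repair your argument to prove the quantity the paper uses, drop $\Phi_\omega$ from the chain rule, keep your Step on the image radius $R$, and apply the Petrova--Wojtaszczyk bound to $P_{T,W}^{\theta_{pre}}$ on $B_\infty(R)$ directly.
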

\paragraph{Step 2 - Probabilistic Bounds on Covering Number of Randomized LoRA Class}
Having obtained a probabilistic estimate of the Lipschitz constant for our random LoRA parameterization map, we derive a probabilistic bound on the covering number of the associated randomized function class; conditioned on the event depending on the draw of the randomized LoRA parameters:
\begin{equation}
\label{eq:conditional_control__LipschitzConstant:main}
    \mathcal{B}_{\epsilon}
\eqdef 
    \biggl\{
        \omega\in \Omega
    :
            L_{LoRA}^{\theta_{pre}}
        \le 
            2^{c_2T}
            \big(
                M\nu\sqrt{2r\log(2W/\epsilon)}
            +
                \|\theta_{pre}\|_{\infty}
            \big)^{c_2T}
    \biggr\}
\end{equation}
The normed space under consideration is typically the parameter space $\R^p$, where the finite collection of model parameters is flattened and concatenated into a single vector. Unless otherwise stated, the norm used is the $\infty$-norm.
We begin with a minor extension—requiring only a brief comment—of a classical covering estimate for balls in normed spaces; see, for instance, \cite[Proposition 15.1.3]{lorentz1996constructive}. The integer $p$ denotes the total number of parameters defining the LoRA mapping.
\begin{lemma}[Covering Number Bounds: Random LoRA Class]
\label{lem:Covering_FunctionSpace:main}
Under suitable assumptions, for each random initialization $\omega\in\Omega$ on the non-trainable parameters $(B^{(t)})_{t=1}^{T+1}$, define the LoRA function space $\mathcal{F}$ 
equipped with the metric induced by the uniform norm $\|\cdot\|_{\infty}$:
\[
    \mathcal{F} \eqdef \mathcal{F}(\omega,M)
    =
    \{ \text{LoRA}(\omega,(A^{(t)})_{t=1}^{T+1})\in C([0,1]^d,\R^D) : |A^{(t)}_{ij}| \leq M,\ \forall t,i,j \}.
\]
Then, for every $\epsilon,
\varepsilon>0$ the $\varepsilon$-covering number $\mathcal{N}(\varepsilon,\mathcal{F})$ of $\mathcal{F}$
satisfies
\begin{equation}
\label{eq:Prob_of_Covering:main}
        \mathbb{P}\Big(
            \mathcal{N}(\varepsilon,\mathcal{F})
            \leq
            \left((2R+2R_0)^{cT+1} / \varepsilon \right)^q
        \Big)
    \ge 
        \mathbb{P}(\mathcal{B}_{\epsilon})
    \ge 
        1-\epsilon
\end{equation}
where $c=c_2$, $R=M\nu\sqrt{2r\log(2W/\epsilon)}$ and $R_0=\|\theta_\text{pre}\|_\infty$ as in Lemma~\ref{lem:Bound_t:CondLipschitz}.
\end{lemma}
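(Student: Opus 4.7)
The plan is to reduce the covering-number estimate for the random function class $\mathcal{F}$ to the covering-number estimate for the cube of admissible trainable parameters $[-M,M]^q$, via the Lipschitz control on the parameter-to-LoRA map furnished by Lemma~\ref{lem:Lipschitz_bound_LoRA:main}. The probabilistic content of the statement is entirely absorbed into conditioning on the event $\mathcal{B}_\epsilon$, so the remaining work is purely deterministic and metric-geometric.

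The first step is to condition on $\mathcal{B}_\epsilon$ defined in~\eqref{eq:conditional_control__LipschitzConstant:main}. By Lemma~\ref{lem:Lipschitz_bound_LoRA:main}, $\mathbb{P}(\mathcal{B}_\epsilon)\ge 1-\epsilon$, and on this event the random parameter-to-LoRA map
$
    (A^{(t)})_{t=1}^{T+1}\mapsto \operatorname{LoRA}\bigl(\omega,(A^{(t)})_t\bigr)
$
is globally $L$-Lipschitz from $([-M,M]^q,\|\cdot\|_\infty)$ into $(C([0,1]^d,\mathbb{R}^D),\|\cdot\|_\infty)$, with $L\le 2^{c_2 T}(R+R_0)^{c_2 T}$. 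Next, I would invoke the elementary fact that an $L$-Lipschitz image of a $\delta$-net is an $L\delta$-net of the image; hence
$$
    \mathcal{N}(\varepsilon,\mathcal{F})
    \;\le\;
    \mathcal{N}\bigl(\varepsilon/L,\,[-M,M]^q,\,\|\cdot\|_\infty\bigr).
$$

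The third step is to apply the classical covering estimate for a cube in $\mathbb{R}^q$ under the supremum norm, i.e.\ the already-cited~\cite[Proposition 15.1.3]{lorentz1996constructive}, which yields
$
    \mathcal{N}\bigl(\delta,[-M,M]^q,\|\cdot\|_\infty\bigr)\le (M/\delta+1)^q\le (2M/\delta)^q
$
once $\delta\le M$. Substituting $\delta=\varepsilon/L$ and the explicit bound on $L$ from Lemma~\ref{lem:Lipschitz_bound_LoRA:main} then gives
$$
    \mathcal{N}(\varepsilon,\mathcal{F})
    \;\le\;
    \left(\frac{2M\cdot 2^{c_2 T}(R+R_0)^{c_2 T}}{\varepsilon}\right)^{q}
    \;=\;
    \left(\frac{M\,(2R+2R_0)^{c_2 T}}{\varepsilon}\right)^{q}.
$$
Setting $c=c_2$ and absorbing the extra factor of $M$ into a single additional power of $(2R+2R_0)$—which is legitimate in the nontrivial regime where $R=M\nu\sqrt{2r\log(2W/\epsilon)}\ge M$, i.e.\ for any reasonable Gaussian scale and rank—promotes the exponent from $c_2 T$ to $c_2 T+1 = cT+1$, matching the claimed form $((2R+2R_0)^{cT+1}/\varepsilon)^q$. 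Together with $\mathbb{P}(\mathcal{B}_\epsilon)\ge 1-\epsilon$, this gives~\eqref{eq:Prob_of_Covering:main}.

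The main obstacle here is not analytic but cosmetic bookkeeping: one must verify that the Lipschitz constant in Lemma~\ref{lem:Lipschitz_bound_LoRA:main} is indeed computed with respect to the same $\|\cdot\|_\infty$ on both parameter and function sides used here, and that the absorption $M\le 2R+2R_0$ is harmless in the regime of interest (so that the extra $+1$ in the exponent $cT+1$ genuinely dominates the prefactor $M$). Everything probabilistic has been delegated to Lemma~\ref{lem:Lipschitz_bound_LoRA:main}, and no further random-matrix or approximation-theoretic input is required at this stage.
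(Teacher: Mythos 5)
Your proposal follows essentially the same route as the paper's proof: condition on $\mathcal{B}_{\epsilon}$, transport a parameter-space net through the Lipschitz parameter-to-LoRA map (this is exactly Lemma~\ref{lem:Mapping_Bound} in the appendix), and finish with the cube-covering estimate of \cite[Proposition 15.1.3]{lorentz1996constructive}. The one substantive deviation is \emph{which} parameter set you cover, and it creates a small gap. You cover the trainable cube $[-M,M]^q$, so after substituting $\delta=\varepsilon/L$ you are left with the bound $\big(2M\cdot 2^{c_2T}(R+R_0)^{c_2T}/\varepsilon\big)^q=\big(2M(2R+2R_0)^{c_2T}/\varepsilon\big)^q$ (note you also dropped a factor of $2$ at this step). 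Promoting the prefactor $2M$ into the exponent, i.e.\ passing to $\big((2R+2R_0)^{cT+1}/\varepsilon\big)^q$, requires $M\le R+R_0$, which you justify only by assuming $R=M\nu\sqrt{2r\log(2W/\epsilon)}\ge M$, i.e.\ $\nu\sqrt{2r\log(2W/\epsilon)}\ge 1$; this is a regime restriction absent from the statement and it genuinely fails for small initialization scale $\nu$ (with $\|\theta_{\mathrm{pre}}\|_{\infty}<M$). The paper avoids this entirely by covering, with the same exponent $q$, the radius-$R$ set of induced weight perturbations $(B^{(t)}A^{(t)})_{t}$ — the bound $\|(B^{(t)}A^{(t)})_{t}\|_{\infty}\le R$ is already part of the event underlying Lemma~\ref{lem:Lipschitz_bound_LoRA} — so the leftover prefactor is $2R\le 2R+2R_0$ and the absorption into the exponent $cT+1$ is unconditional. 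If you swap your cube of radius $M$ for that ball of radius $R$ (keeping your Lipschitz constant and your net-transport step unchanged), your argument coincides with the paper's proof of Lemma~\ref{lem:Covering_FunctionSpace}.
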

\paragraph{Step 3 - Generalization Bounds Conditioned on Covering Number Bounds}
We control the randomized generalization gap $\mathbf{G}$ conditioned on the event $\mathcal{B}_{\epsilon}$, as defined in~\eqref{eq:conditional_control__LipschitzConstant}.  Upon conditioning on the right realizations, which occurs with high probability, our covering number bounds give us access to Dudley's integral estimate; see e.g.~\citep[Corollary 2.2.8]{varderVaartWellner_EmpiricalProcessesBook_1996}.
\begin{lemma}[Conditional Generalization Bounds for ``Derandomized'' LoRAs]
\label{lem:Bound_t:CondLipschitz:main}
Under suitable assumptions, let  $G
    \eqdef
    \sup_{f\in \mathcal{F}}\,
        \big|
            \mathcal{R}(f)
            -
            \mathcal{R}^N(f)
        \big|$
be the generalization error. The following holds
\begin{equation}
\label{eq:generalization_conditional__GoodCovering:main}
    \mathbb{P}\Big(
                \mathbf{G} 
            \leq 
                4\min 
            \big\{  1, 
            \sqrt{q}
                6\sqrt{A}/\sqrt{N}
            \big\} + \sqrt{8\log (2/\epsilon)}/\sqrt{N}
    \big|
        \,
        \mathcal{B}_{\epsilon}
    \Big)
\ge 
    1-\epsilon
\end{equation}
where $A=(cT+1)\log(2R+2R_0)$, $R=M\nu\sqrt{2r\log(2W/\epsilon)}$ and $R_0=\|\theta_\text{pre}\|_\infty$.
\end{lemma}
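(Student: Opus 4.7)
The plan is to work conditionally on the good event $\mathcal{B}_\epsilon$, on which Lemma~\ref{lem:Covering_FunctionSpace:main} provides a deterministic covering number bound for $\mathcal{F}(\omega,M)$, and then apply the standard empirical-process toolkit (symmetrization, Dudley's entropy integral, and McDiarmid concentration) to the now-deterministic function class. The key observation that makes this clean is that the LoRA randomness $(B^{(t)})_t$ is independent of the i.i.d.\ sample $\{(X_n,Y_n)\}_{n=1}^N$, so conditioning on $\mathcal{B}_\epsilon \in \sigma((B^{(t)})_t)$ does not alter the data-generating distribution, and the classical i.i.d.\ machinery applies verbatim to the conditional law of $\mathbf{G}$.

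Concretely, I would proceed in three standard steps. First, since $\ell$ is valued in $[0,1]$, the empirical process $\mathbf{G}$ has bounded differences $1/N$ with respect to each sample, so McDiarmid's inequality gives, conditionally on $\mathcal{B}_\epsilon$, $\mathbf{G} \leq \mathbb{E}[\mathbf{G}\mid\mathcal{B}_\epsilon] + \sqrt{8\log(2/\epsilon)/N}$ with probability at least $1-\epsilon$; this yields the last term of the stated bound. Second, the usual symmetrization inequality bounds $\mathbb{E}[\mathbf{G}\mid\mathcal{B}_\epsilon] \leq 2\,\mathbb{E}[\hat{\mathcal{R}}_N(\ell\circ\mathcal{F})\mid\mathcal{B}_\epsilon]$. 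Third, Dudley's entropy integral (e.g.\ \citep[Cor.~2.2.8]{varderVaartWellner_EmpiricalProcessesBook_1996}) bounds this Rademacher complexity by
\[
    \frac{12}{\sqrt{N}}\int_0^1 \sqrt{\log \mathcal{N}(\varepsilon,\ell\circ\mathcal{F},L_2)}\,d\varepsilon.
\]
The $1$-Lipschitzness of $\ell$ transfers $\|\cdot\|_\infty$-covers of $\mathcal{F}$ to $L_2$-covers of $\ell\circ\mathcal{F}$, and on $\mathcal{B}_\epsilon$ Lemma~\ref{lem:Covering_FunctionSpace:main} gives $\log\mathcal{N}(\varepsilon,\mathcal{F},\|\cdot\|_\infty)\leq q(A-\log\varepsilon)$.

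Substituting into the Dudley integral and applying the change of variables $u=-\log\varepsilon$ yields
\[
    \sqrt{q}\int_0^1 \sqrt{A-\log\varepsilon}\,d\varepsilon
    = \sqrt{q}\int_0^\infty \sqrt{A+u}\,e^{-u}\,du
    \leq \sqrt{q}\bigl(\sqrt{A}+\tfrac{\sqrt{\pi}}{2}\bigr),
\]
which, upon absorbing absolute constants from symmetrization and Dudley, produces the $\sqrt{q}\cdot 6\sqrt{A}/\sqrt{N}$ term. Combining with the trivial bound $\mathbf{G}\leq 1$ (since $\ell\in[0,1]$) delivers the $4\min\{1,\cdot\}$ form in the claim.

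I do not expect any serious obstacle: the covering number bound delivered by Lemma~\ref{lem:Covering_FunctionSpace:main} is of textbook polynomial-entropy form, and everything afterwards is a routine Dudley computation plus McDiarmid. The only point that requires care is the probabilistic bookkeeping---namely, checking that McDiarmid and symmetrization remain valid \emph{conditionally} on $\mathcal{B}_\epsilon$. This follows from the independence of the training data and the LoRA factors, which is hard-wired into Setting~\ref{setting}. Once this is noted, the contribution of this lemma is essentially to package the conditional covering estimate of Lemma~\ref{lem:Covering_FunctionSpace:main} into a conditional generalization bound, so that a final union bound over $\mathcal{B}_\epsilon$ (performed in the subsequent assembly into Theorem~\ref{thrm:Main_UB}) produces an unconditional statement.
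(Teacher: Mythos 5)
Your proposal is correct and follows essentially the same route as the paper: condition on $\mathcal{B}_\epsilon$, invoke the standard Rademacher generalization bound (the paper cites \citep[Theorem 8]{bartlett2002rademacher}, which is exactly your McDiarmid-plus-symmetrization packaging), then bound the empirical Rademacher complexity via Dudley's entropy integral using the conditional covering estimate. The only cosmetic differences are that the paper evaluates the entropy integral over $[0,1/2]$ with a Cauchy--Schwarz trick (after considering the optimal lower limit $t^*$) while you take $t=0$ and a $u=-\log\varepsilon$ substitution, and the paper derives the $\min\{1,\cdot\}$ case via the $t^*\approx 1/2$ regime rather than the trivial $\mathbf{G}\le 1$ bound; both routes give the same constants up to the same implicit ``$A$ large enough'' slack.
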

Theorem~\ref{thrm:Main_UB} can be deduced from here by lower-bounding $\mathbb{P}(\mathbf{G}\le G^{\star})$ upon conditioning on $\mathcal{B}_{\varepsilon}$ (which happens with probability at-least $\varepsilon$).
\section{Experimental Validation}
\label{s:Experiments}
We now verify our results experimentally, to see if the general \textit{worst-case} trend which our theory predicts is indeed reflected in practice. We remind the reader that our results are, very much, of a \textit{worst-case} nature—both \textit{distributionally agnostic} and \textit{algorithmically agnostic}. Thus, even if we do expect to see the general pattern that as the LoRA rank $r$ grows, the generalization gap increases, we do not exactly expect the square-root rate of $\tilde{\mathcal{O}}(\sqrt{r})$ (ignoring the effect of $N$) to manifest; rather, we expect that the generalization gap grows at a \textit{slower} rate than the absolute worst-case over all algorithms and data-generating distributions.

\paragraph{CLIP-LoRA} 

CLIP is a pretrained contrastive model that aligns image and text representations in a shared embedding space, enabling zero-shot and few-shot transfer across a variety of visual classification tasks. In our experiments, we apply LoRA on CLIP for downstream image classification on standard datasets as in \cite{zanella2024low}.
The downstream classifier, or predictor $f$, is the LoRA-augmented CLIP model fine-tuned using cross-entropy loss. We observe that the generalization gap $\mathbf{G}$ generally increases with the LoRA rank $r$, suggesting that higher parameterization yields improved training performance but reduced generalization.  See Appendix~\ref{a:Experiments} for experiment details. 

\begin{figure}[h]
\centering
\includegraphics[width=0.32\textwidth]{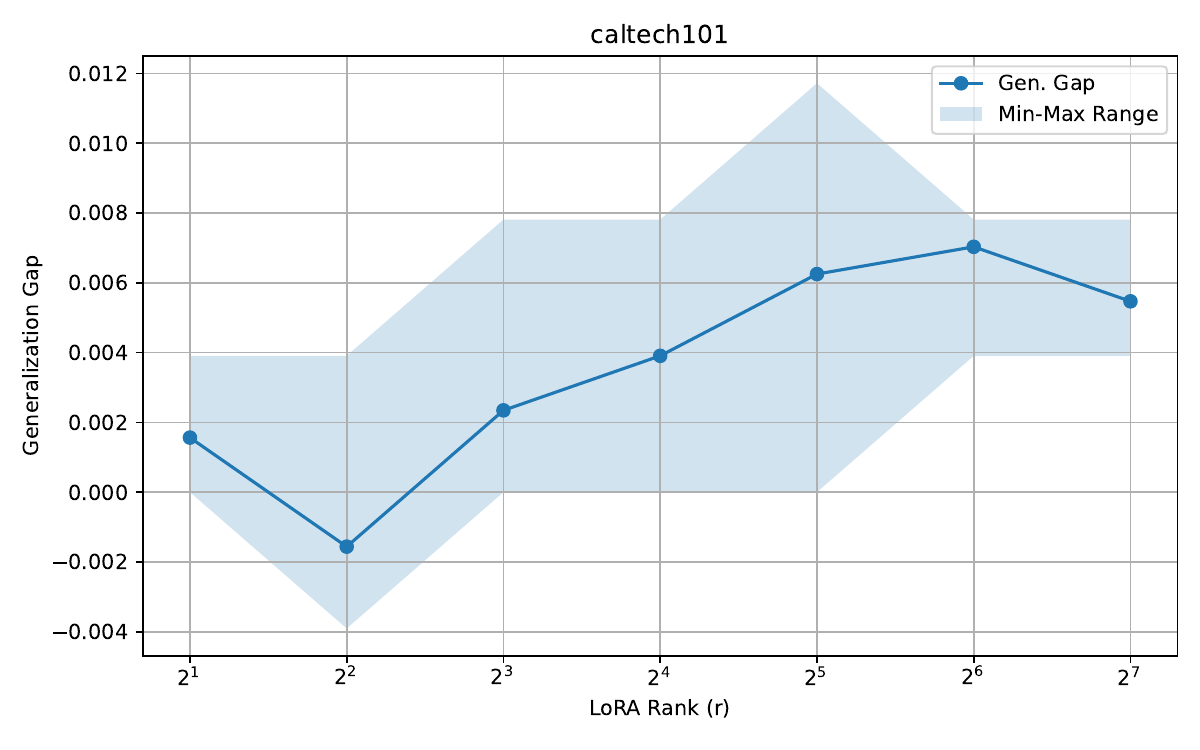}
\includegraphics[width=0.32\textwidth]{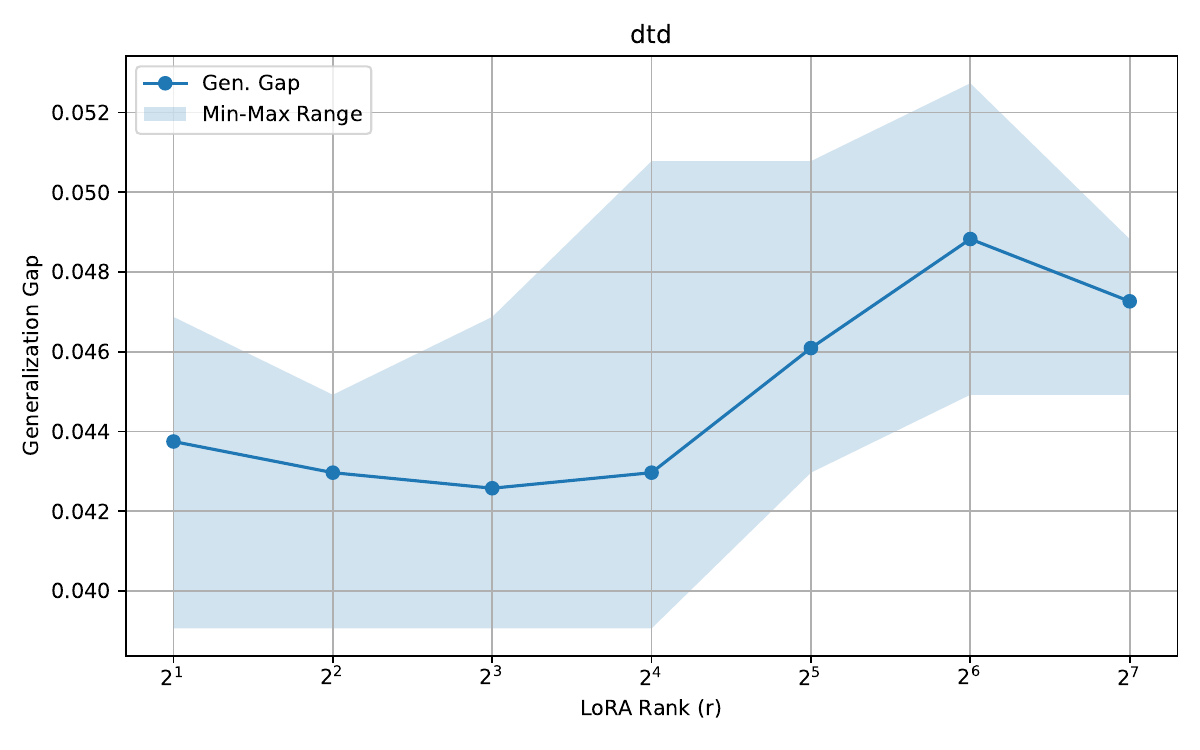}
\includegraphics[width=0.32\textwidth]{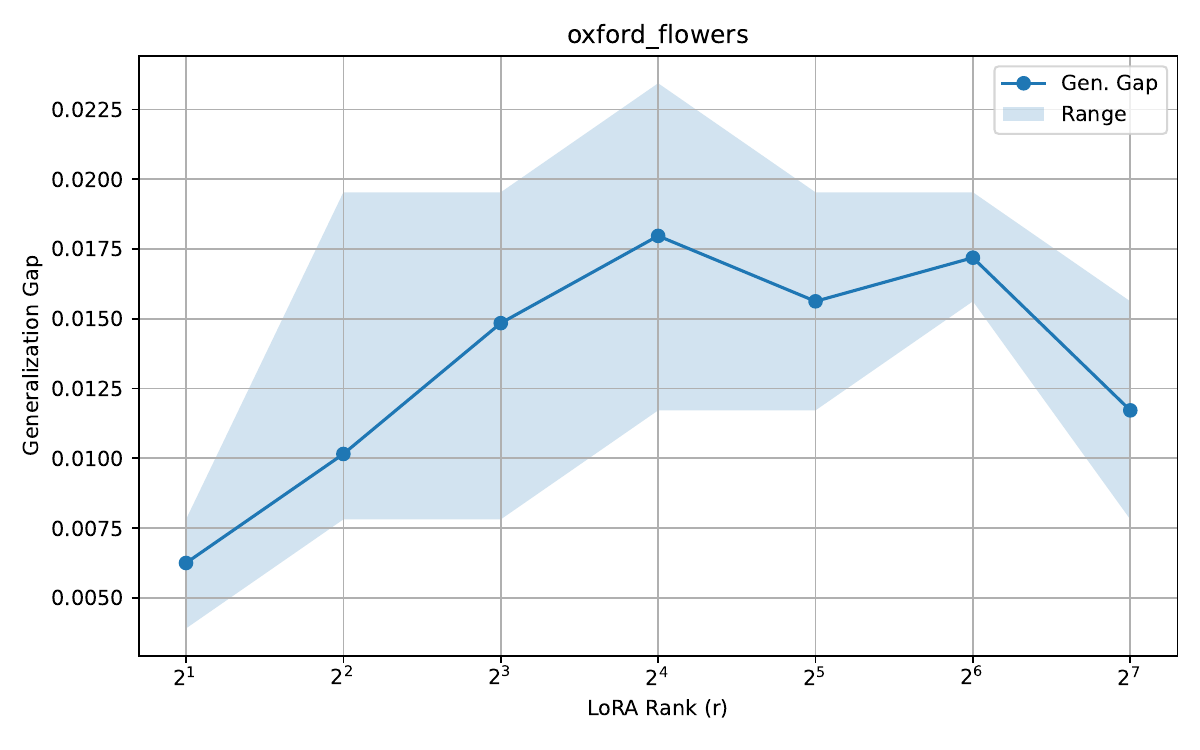}
\includegraphics[width=0.32\textwidth]{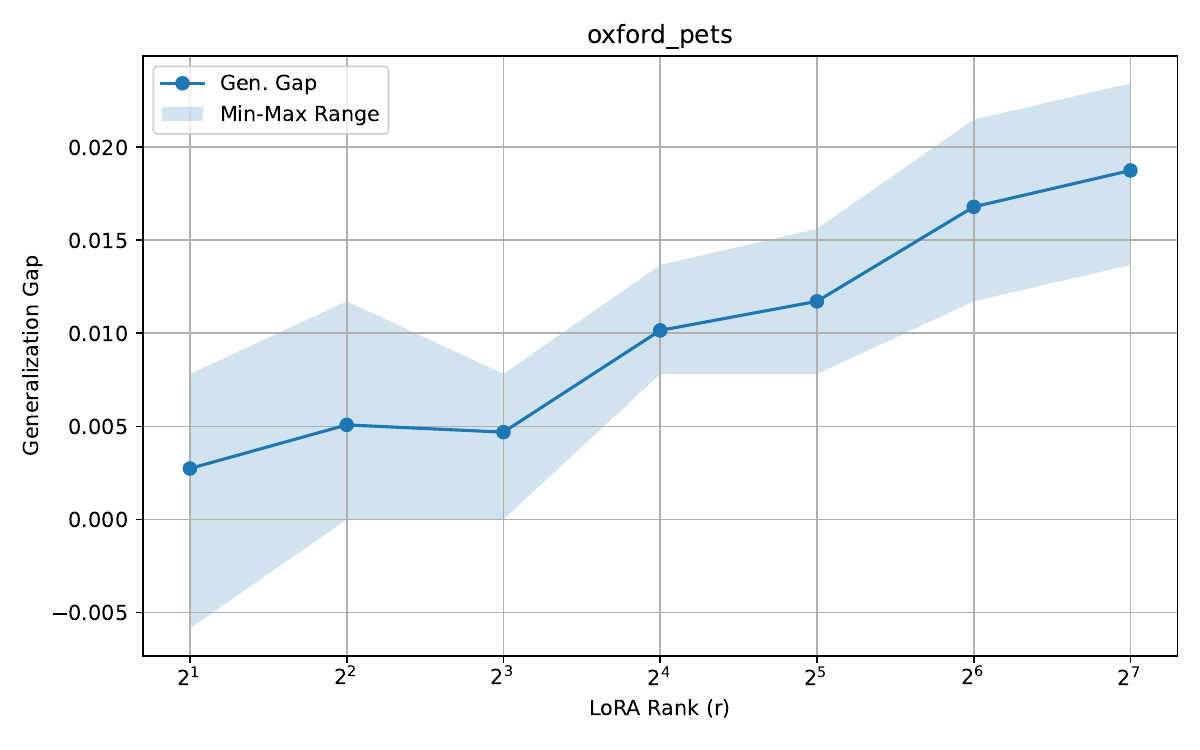}
\includegraphics[width=0.32\textwidth]{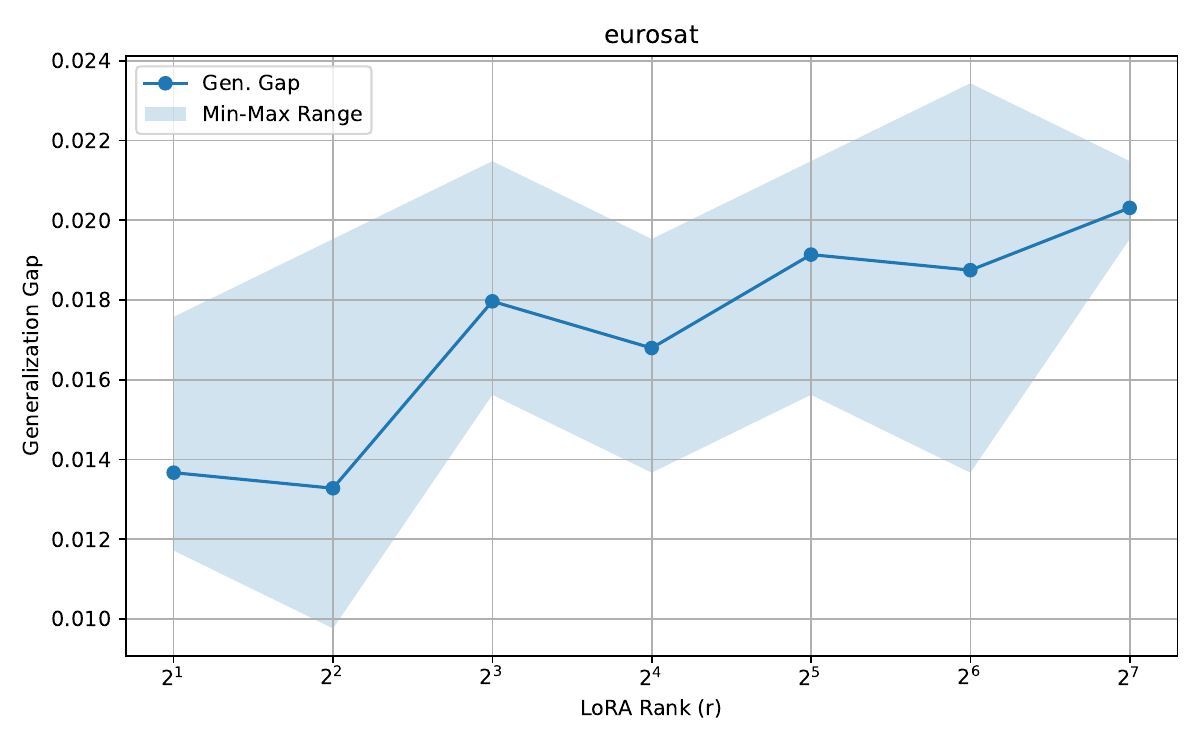}
\includegraphics[width=0.32\textwidth]{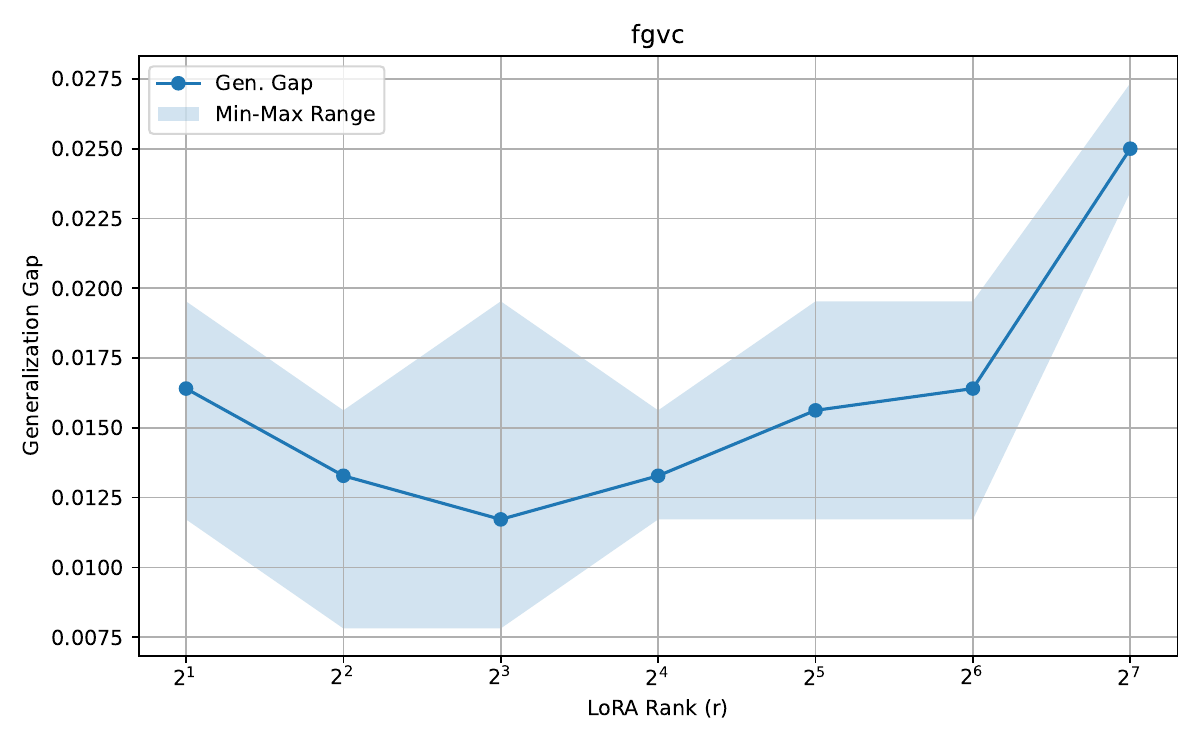}
\caption{Generalization gap on different datasets in downstream classification tasks. }
\label{fig:gap_vs_r}
\end{figure}

While this trend qualitatively aligns with theoretical predictions about overparameterization and generalization, the rate of increase in $\mathbf{G}$ with respect to $r$ is not as sharp as suggested by the theoretical upper bounds in Theorem~\ref{thrm:Main_UB}. This discrepancy is expected, as such bounds are typically agnostic to the specific optimization algorithm and dataset used, and therefore tend to be loose in practice.

\section{Conclusion}
\label{s:Conclusion}
This work provides the first high-probability sample complexity bound for LoRA fine-tuning (Theorem~\ref{thrm:Main_UB}), showing that the generalization gap scales as $\tilde{\mathcal{O}}(\sqrt{r}/\sqrt{N})$, matching prior expectations but now with rigorous tail guarantees. We complement this with a constructive lower bound (Theorem~\ref{thrm:LBs}) under a natural data distribution (Assumption~\ref{ass:BreakingDistribution}), demonstrating the rate is tight up to constants. Together, these results draw a sharp theoretical boundary for LoRA generalization with random factors, revealing an intrinsic rank–sample tradeoff. 

\section*{Acknowledgements}
A.\ Kratsios acknowledges financial support from an NSERC Discovery Grant No.\ RGPIN-2023-04482 and No.\ DGECR-2023-00230.  They both also acknowledge that resources used in preparing this research were provided, in part, by the Province of Ontario, the Government of Canada through CIFAR, and companies sponsoring the Vector Institute\footnote{\href{https://vectorinstitute.ai/partnerships/current-partners/}{https://vectorinstitute.ai/partnerships/current-partners/}}.

\bibliographystyle{plainnat}
\bibliography{ref}
\appendix

\section{Experiment Details}
\label{a:Experiments}
The experiments were based on code in \cite{zanella2024low}, which can be found at Github repository \url{https://github.com/MaxZanella/CLIP-LoRA}, which was published under the GNU AFFERO General Public license. The experiments were run on 7 NVIDIA GeForce RTX 3090 GPUs and took less than one day. The code for the experiments can be found in the supplement material

\section{\texorpdfstring{Proof of Theorem~\ref{thrm:Main_UB}}{Proof of Main Upper-Bound}}
\label{s:Proof_Main_UB}

\subsection{Auxiliary Definitions}
\label{s:Proof_Main_UB__ss:AuxiliaryDefinitions}
Before moving on to the proof of our first main result, we briefly recapitulate some standard definitions on which our analysis relies.

\begin{definition}[Covering number]
\label{def:CoveringNumber}
    Let \( (V, \|\cdot\|) \) be a normed space. 
    \begin{enumerate}[a)]
        \item 
        Let $B(\rho,v_0)\eqdef \{v\in V:\|v-v_0\|\leq \rho\}$ be the ball with radius $\rho$ and centre $v_0\in V$. When $v_0=0$, we suppress the notation and write: $B(\rho)\eqdef B(\rho,0)$.
        \item  
        For any subset \( K \subseteq V \) and number \( \varepsilon > 0 \), an \emph{\( \varepsilon \)-covering} of \( K \) is a finite set \( \{v_1, v_2, \dots, v_N\} \subseteq K \) such that:
        \[
        K \subseteq \bigcup_{i=1}^N B(\varepsilon, v_i)
        \]
        \item 
        The \emph{$\varepsilon$-covering number} \( \mathcal{N}(\varepsilon, K) \) of a compact set \( K \subseteq V \) is defined as:
        \[
        \mathcal{N}(\varepsilon,K)
        \eqdef
        \min \{ N : \exists \text{ an $\varepsilon$-covering of $K$ with size $N$} \}
        < \infty.
        \]
        If \( K=B(\rho) \) is a ball in \( V \) with radius \( \rho \), we suppress the notation of $K$ and write:
        \[
        \mathcal{N}(\varepsilon,\rho) = \mathcal{N}(\varepsilon, B(\rho)).
        \]
    \end{enumerate}
\end{definition}

\subsection{The Proof}
\label{s:Proof_Main_UB__ss:Proof}
We begin by first quantifying the generalization bounds of MLPs induced from their parameter spaces.  Though this may yield loose bounds in general, due to inherent symmetries of MLPs\footnote{E.g. $(-1,1)^{\top}\operatorname{ReLU}((1,-1)x)=(1,-1)^{\top}\operatorname{ReLU}((-1,1)x)$.} this seems to be the most natural approach for LoRAs, which operate directly on the MLP parameter space.
\subsubsection{Step 1 - Probabilistic Bounds on Lipschitz Regularity of Randomized LoRA Class}
\label{s:Proof_Main_UB__ss:Proof___sss:CoveringNumberBound}

We begin by probabilistically controlling the \textit{random} Lipschitz constant of our \textit{random} LoRAs.

\begin{lemma}[High Probability Bound on Lipschitz Regularity of Parameter-to-LoRA Map]
\label{lem:Lipschitz_bound_LoRA}
In Setting~\ref{setting}.
%
%
%
Then with probability at least $1-\epsilon$ (over the random initialization on $(B^{(t)})_t$), the Lipschitz constant $L_{LoRA}^{\theta_{pre}}$ of the (random) LoRA map in~\eqref{eq:LoRA} is at-most 
\[
        L_{LoRA}^{\theta_{pre}}
    \le 
        2^{c_2T}\big(
        \underbrace{M\nu\sqrt{2r\log(2W/\epsilon)}}_{R}
        +
        \underbrace{\|\theta_{pre}\|_{\infty}}_{R_0}
    \big)^{c_2T}.
\]
where $\nu>0$ is the random initialization scale of $B^{(t)}_{ij}\sim\mathcal{N}(0,\nu^2)$.
\end{lemma}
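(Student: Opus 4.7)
The plan is to decompose the random parameter-to-LoRA map in~\eqref{eq:LoRA} as a composition of two pieces: the (random) affine assembly map
\[
    \Phi_\omega:\; (A^{(t)})_t \;\mapsto\; (B^{(t)}(\omega)A^{(t)})_t + \theta_{pre},
\]
sending trainable LoRA factors to effective MLP parameters, followed by the perturbed representation map $P_{T,W}^{0}$ from~\eqref{eq:Perturbation_Map}, i.e.\ the parameter-to-MLP realization map. Since the Lipschitz constant of a composition is bounded by the product of the Lipschitz constants of its factors, I would control each piece separately and multiply.

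For the MLP factor, I would invoke the local Lipschitz regularity estimate for the parameter-to-MLP map from~\cite{PetrovaWojtaszczyk_LimitationsMLPs_JMLR_2023}. Their result shows that, restricted to any $\ell^\infty$-ball of parameters of radius $\rho$, the map $\theta \mapsto \hat{f}(\cdot\,|\,\theta)$ from $\mathbb{R}^p$ to $C([0,1]^d,\mathbb{R}^D)$ has Lipschitz constant of the order $(2\rho)^{c_2T}$, where the exponential-in-$T$ scaling arises from the multiplicative propagation of parameter perturbations through the $T$ hidden layers, and $c_2$ depends on the width $W$ and on whether $\sigma$ is bounded Lipschitz or ReLU (both cases being admissible under Setting~\ref{setting}). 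In our application, the effective parameters $\Phi_\omega((A^{(t)})_t)$ live in an $\ell^\infty$-ball of radius at most $\max_t\|B^{(t)}A^{(t)}\|_\infty + \|\theta_{pre}\|_\infty$, so it remains to control the first term with high probability.

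For the randomness in $B^{(t)}$, since each entry is $\mathcal{N}(0,\nu^2)$, a standard sub-Gaussian tail bound combined with a union bound over all at most $(T+1)Wr$ entries yields
\[
    \mathbb{P}\Bigl(\max_{t,i,j}\,|B^{(t)}_{ij}| \le \nu\sqrt{2\log(2W/\epsilon)}\Bigr) \ge 1-\epsilon,
\]
after absorbing the $T$ and $r$ factors appropriately into the constants (or equivalently using an entrywise Gaussian sum bound on each coordinate of $B^{(t)}A^{(t)}$, which has variance at most $rM^2\nu^2$). On this favourable event, $\|B^{(t)}A^{(t)}\|_\infty \le M\nu\sqrt{2r\log(2W/\epsilon)} = R$, so the effective parameter radius is at most $R+R_0$. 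The affine assembly map $\Phi_\omega$ itself has $\ell^\infty\!\to\!\ell^\infty$ operator norm bounded by a polynomial factor in $r$ and in $\max_{i,j,t}|B^{(t)}_{ij}|$, which is at most a constant multiple of $R/M$; multiplying by the MLP Lipschitz bound $(2(R+R_0))^{c_2T}$ and absorbing this polynomial factor into the exponential (by incrementing $c_2$) yields the claimed estimate $L_{LoRA}^{\theta_{pre}} \le 2^{c_2T}(R+R_0)^{c_2T}$.

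The main obstacle is careful norm bookkeeping across the composition: the Petrova–Wojtaszczyk bound, the assembly-map estimate, and the Gaussian tail bound are most naturally stated in subtly different metric conventions (sup-norm on $C([0,1]^d,\mathbb{R}^D)$, $\ell^\infty$ on parameters, entrywise Gaussian on $B^{(t)}$), and reconciling them requires tracking dimension-dependent constants that must be absorbed into the single constant $c_2$ without altering the $(R+R_0)^{c_2T}$ structure. A secondary subtlety is that in the ReLU case, one must exploit the input boundedness $x\in[0,1]^d$ from Setting~\ref{setting} to prevent layerwise blow-up of the hidden activations; this ensures the Lipschitz estimate can be iterated cleanly through all $T$ layers and is what pins down the precise form of $c_2$ in terms of $W$.
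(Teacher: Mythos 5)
Your proposal is correct and follows essentially the same route as the paper's proof: the Petrova--Wojtaszczyk local Lipschitz estimate for the parameter-to-MLP realization map on an $\ell^\infty$-ball of radius $R+R_0$, combined with a Gaussian tail bound plus union bound showing $\max_t\|B^{(t)}A^{(t)}\|_\infty\le R=M\nu\sqrt{2r\log(2W/\epsilon)}$ with probability $1-\epsilon$. The only difference is that you explicitly track the Lipschitz constant of the linear assembly map $(A^{(t)})_t\mapsto(B^{(t)}A^{(t)})_t$ and absorb it into $c_2$, a factor the paper's proof passes over silently, so if anything your bookkeeping is slightly more careful than the original.
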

\begin{proof}
\noindent\textbf{Step 1 - Local Lipschitz Regularity of Parameterization Map}\\
Fix $R>0$ (to be settled retroactively).  
If $\sigma$ is either: a bounded Lipschitz function or the $\operatorname{ReLU}$ function then~\cite[Theorem 3]{PetrovaWojtaszczyk_LimitationsMLPs_JMLR_2023}, there is a constant $0\le c_1\eqdef c_1(d,T,W)\le c_2\eqdef c_2(d,T,W)>0$, depending only on $d$, $T$, and on $W$, such that the restriction of the map $P_{T,W}^{\theta_{pre}}(\theta)$ to the $\infty$-norm ball $B_{\infty}(R)\eqdef \{\theta\in\R^p:\|\theta\|_\infty \leq R\}$ is $
L_{LoRA}^{\theta_{pre}}
$-Lipschitz with
\begin{equation}
\label{eq:bound_LipschitzConstantReLUParmaeterization}
    c_1\, T(1+\log_2(R+ \|\theta_{pre}\|_{\infty}))
        \le 
    \log_2(L_{LoRA}^{\theta_{pre}})
        \le 
    c_2\, T(1+\log_2(R + \|\theta_{pre}\|_{\infty} ))
\end{equation}
where we have used the fact that the network weights are at-most $R$ (matrix parameterized by LoRA) plus the initial weights $\theta_{pre}$.

\noindent\textbf{Step 2 - Diameter of LoRA Parameter Space}
\hfill\\
For all $t=1,\dots,T+1$, $i=1,...,W$, note that the column $B^{(t)}_{i\cdot}\sim\mathcal{N}(0,\nu^2\mathbf{I}_r)$ is a Gaussian random vector, hence we have the concentration: $\forall \delta>0$,
\[
    \mathbb{P}\left[\left|\sum_{k=1}^r B^{(t)}_{ik}\right| > \delta\right]
    \leq
    2\exp\left(-\frac{\delta^2}{2\nu^2r}\right)
\]
In other words, with probability at least $1-\epsilon$, it holds that 
\[
    \left|\sum_{k=1}^r B^{(t)}_{ik}\right| \leq \nu\sqrt{2r\log(2/\epsilon)}.
\]
Denote $\|{W}\|_\infty\eqdef \max_{ij} |W_{ij}|$ for any matrix $W$.
Fix $M>0$ and assume that $\|A^{(t)}\|_\infty \leq M$ throughout the training for all $t=1,...,T+1$. Then for all $j=1,...,W$, we have
\[
\left|\left[B^{(t)}A^{(t)}\right]_{ij} \right|
=
\left|\sum_{k=1}^r B^{(t)}_{ik} A^{(t)}_{kj}\right|
\leq
M \left|\sum_{k=1}^r B^{(t)}_{ik}\right|.
\]
By union bound on $i=1,...,W$, with probability at least $1-\epsilon$, it holds that
\[
    \left|\left[B^{(t)}A^{(t)}\right]_{ij} \right|
    \leq M\nu\sqrt{2r\log(2W/\epsilon)},\quad
    \forall i,j,t.
\]
Hence we have, with probability at least $1-\epsilon$,
\begin{equation} \label{eq:LoRA:bound}
    \left\| (B^{(t)}A^{(t)})_{t=1}^T \right\|_\infty
    = \max_t \left\| B^{(t)}A^{(t)} \right\|_\infty
    = \max_{t,i,j} \left|\left[B^{(t)}A^{(t)}\right]_{ij} \right|
    \leq M\nu\sqrt{2r\log(2W/\epsilon)} =: R.    
\end{equation}

Hence we take $R=M\nu\sqrt{2r\log(2W/\epsilon)}$ and we assume the event in Eq. (\ref{eq:LoRA:bound}) holds in the remaining of the proof. \footnote{If we set $L^2$ constraints on $A^{(t)}$ instead of $L^{\infty}$ norm, the constraint size $M$ can be put inside the $\log$ to further improve the bound.}

\noindent\textbf{Step 3 - Lipschitz Constant of LoRA Parameterization}
\hfill\\
Combining~\eqref{eq:bound_LipschitzConstantReLUParmaeterization} and~\eqref{eq:LoRA:bound} we find that
\begin{align*}
            L_{LoRA}^{\theta_{pre}}
     \le 
        2^{c_2\, T(1+\log_2(R + \|\theta_{pre}\|_{\infty} ))}
 \le 
    2^{c_2T}\big(
        M\nu\sqrt{2r\log(2W/\epsilon)}
        +
        \|\theta_{pre}\|_{\infty}
    \big)^{c_2T}.
\end{align*}
\end{proof}

\subsubsection{Step 2 - Probabilistic Bounds on Covering Number of Randomized LoRA Class}
Having obtained a probabilistic estimate of the Lipschitz constant for our random LoRA parameterization map, we derive a probabilistic bound on the covering number of the associated randomized function class; conditioned on the event depending on the draw of the randomized LoRA parameters:
\begin{equation}
\label{eq:conditional_control__LipschitzConstant}
    \mathcal{B}_{\epsilon}
\eqdef 
    \biggl\{
        \omega\in \Omega
    :
            L_{LoRA}^{\theta_{pre}}
        \le 
            2^{c_2T}
            \big(
                M\nu\sqrt{2r\log(2W/\epsilon)}
            +
                \|\theta_{pre}\|_{\infty}
            \big)^{c_2T}
    \biggr\}
\end{equation}
Throughout this paper, the normed space under consideration is typically the parameter space $\R^p$, where the finite collection of model parameters is flattened and concatenated into a single vector. Unless otherwise stated, the norm used is the $\infty$-norm.
We begin with a minor extension—requiring only a brief comment—of a classical covering estimate for balls in normed spaces; see, for instance, \cite[Proposition 15.1.3]{lorentz1996constructive}. The integer $p$ denotes the total number of parameters defining the LoRA mapping.
\begin{lemma}[Covering Number Estimates - Parameter Space]
For each $\varepsilon,\rho>0$, the $\varepsilon$-covering number $\mathcal{N}(\varepsilon,\rho)$ of the closed ball $B(\rho)\subseteq (\R^p,\|\cdot\|_\infty)$ satisfies
\[
    \mathcal{N}(\varepsilon,\rho) \leq \big(\rho \, 2^{-\lfloor \log_2(\varepsilon)\rfloor}\big)^{p}.
\]
\end{lemma}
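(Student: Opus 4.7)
The plan is to exploit the fact that in $(\mathbb{R}^p,\|\cdot\|_\infty)$ every ball is a hypercube: specifically, $B(\rho)$ coincides with the axis-aligned cube $[-\rho,\rho]^p$, and every $\|\cdot\|_\infty$-ball of radius $r$ about a point $v$ is the translated smaller cube $v+[-r,r]^p$. Consequently, the covering problem reduces to tiling the big cube by smaller cubes, which is a classical and essentially combinatorial computation.

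The first step is to pass to a dyadic scale. Set $\delta \eqdef 2^{\lfloor \log_2 \varepsilon\rfloor}$, the largest dyadic number not exceeding $\varepsilon$, so that $\delta \leq \varepsilon$ and $1/\delta = 2^{-\lfloor\log_2 \varepsilon\rfloor}$. Since a $\delta$-net is automatically an $\varepsilon$-net, it suffices to cover $[-\rho,\rho]^p$ by $\|\cdot\|_\infty$-balls of radius $\delta$.

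Next, I would construct the natural axis-aligned grid of step $2\delta$ tiling $[-\rho,\rho]^p$ by translates of $[-\delta,\delta]^p$. Each such cube is, by construction, the $\|\cdot\|_\infty$-ball of radius $\delta$ about its center, so taking the centers of those tiles whose closures meet $B(\rho)$ produces a valid $\varepsilon$-covering inside $B(\rho)$. Along each of the $p$ coordinate axes, at most $\rho/\delta$ tiles are needed --- the dyadic alignment between the grid step $2\delta$ and the half-width $\rho$ of the cube is exactly what avoids any additional boundary cells. Multiplying across the $p$ axes yields the bound $(\rho/\delta)^p = \bigl(\rho \cdot 2^{-\lfloor\log_2 \varepsilon\rfloor}\bigr)^p$.

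The only nontrivial point --- and the reason the estimate requires even a brief comment rather than a direct citation of \cite[Proposition 15.1.3]{lorentz1996constructive} --- is the boundary bookkeeping, i.e., ensuring no extra cells are introduced at the cube's edges and that all centers remain inside $B(\rho)$. This is precisely why $\delta$ is taken dyadic rather than simply $\delta=\varepsilon$: the factor-of-two slack between $\delta$ and $\varepsilon$ (one has $\delta \leq \varepsilon < 2\delta$) absorbs any fractional leftover at the boundary, yielding the clean power-of-$(\rho \cdot 2^{-\lfloor \log_2 \varepsilon \rfloor})$ form displayed in the statement. I expect the entire argument to fit in well under half a page.
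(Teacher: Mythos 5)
Your approach --- tiling the $\|\cdot\|_\infty$-cube by smaller cubes at a dyadic scale --- is exactly the one the paper takes, and the paper's own one-sentence proof is, if anything, even looser than yours. However, the specific justification you offer for the boundary bookkeeping does not actually hold up, and this is precisely the point where the bound (as literally stated, without a ceiling) is already questionable. Your claim that ``the dyadic alignment between the grid step $2\delta$ and the half-width $\rho$ of the cube is exactly what avoids any additional boundary cells'' is false: choosing $\delta = 2^{\lfloor \log_2 \varepsilon\rfloor}$ aligns $\delta$ with $\varepsilon$, not with $\rho$, and $\rho$ is an arbitrary positive real. Likewise, the ``factor-of-two slack'' $\delta \le \varepsilon < 2\delta$ gives no help whenever $\varepsilon$ is itself a power of two, since then $\delta = \varepsilon$ and there is zero slack. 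A concrete counterexample to the claimed inequality, with your $\delta$: take $p = 1$, $\varepsilon = 1$ (so $\delta = 1$), and $\rho = 3.5$. Covering $[-3.5,3.5]$ by closed intervals of radius $1$ centered inside the set requires $\lceil \rho/\delta\rceil = 4$ intervals, yet the displayed bound evaluates to $(\rho \cdot 2^{-\lfloor \log_2 \varepsilon\rfloor})^p = 3.5$.

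The honest count along each axis is $\lceil \rho/\delta \rceil$, not $\rho/\delta$, so the correct conclusion of the tiling argument is $\mathcal{N}(\varepsilon,\rho) \le \lceil \rho\, 2^{-\lfloor \log_2 \varepsilon\rfloor}\rceil^p$. This is not a defect you introduced --- the paper's proof writes ``precisely'' where it should write a ceiling --- but your proposal compounds it by asserting a reason (grid/boundary alignment) that a referee would immediately reject. In the paper's downstream use (Lemma~\ref{lem:Covering_FunctionSpace}) the floor is discarded via $2^{-\lfloor \log_2(\varepsilon/L)\rfloor} \le 2L/\varepsilon$, and that extra factor of $2$ silently absorbs the ceiling, so nothing breaks further on; but at the level of this lemma you should either add the ceiling, or simply present the dyadic tiling and note that the displayed form is taken up to a harmless integer rounding that is absorbed one step later.
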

\begin{proof}
The number of cubes of side-length $2^{-\lfloor \log_2(\varepsilon/L)\rfloor}$ in $[0,\rho]^d$ is precisely $\big(\rho 2^{-\lfloor \log_2(\varepsilon/L)\rfloor}\big)^p$.
\end{proof}

\begin{lemma}[Covering Number Estimates - Function Space]
\label{lem:Mapping_Bound}
Fix $r>0$ and $p\in \mathbb{N}_+$.
Let $(\mathcal{F},\mu)$ be a metric space of functions and $P:(\mathbb{R}^p,\|\cdot\|_\infty)\to
(\mathcal{F},\mu)$ be a ``parameterization map'' with Lipschitz constant at most $L$. Let $K=P(B(\rho))\subseteq (\mathcal{F},\mu)$ be the image of the ball $B(\rho)\subset (\mathbb{R}^p,\|\cdot\|_\infty)$. Then, for every $\varepsilon>0$ the $\varepsilon$-covering number $\mathcal{N}(\varepsilon,K)$ of $K\subseteq (\mathcal{F},\mu)$ is bounded above by
\[
\mathcal{N}(\varepsilon,K)
\le 
\big(\rho\, 2^{-\lfloor \log_2(\varepsilon/L)\rfloor}\big)^{p}
\]
\end{lemma}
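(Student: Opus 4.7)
The plan is to push a covering of the domain ball forward through $P$ and use the Lipschitz bound to control the resulting covering radius in $(\mathcal{F},\mu)$. First I would invoke the preceding parameter-space covering lemma with the radius rescaled from $\varepsilon$ to $\varepsilon/L$, yielding centers $\theta_1,\dots,\theta_N\in B(\rho)\subseteq(\R^p,\|\cdot\|_\infty)$ with
\[
    N\;\le\;\big(\rho\,2^{-\lfloor \log_2(\varepsilon/L)\rfloor}\big)^{p},
\]
such that for every $\theta\in B(\rho)$ there exists $i\in\{1,\dots,N\}$ with $\|\theta-\theta_i\|_\infty\le \varepsilon/L$. This is exactly the right scale to match the target accuracy after pushing through $P$.

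Next I would show that $\{P(\theta_1),\dots,P(\theta_N)\}$ is an $\varepsilon$-cover of $K$ in $(\mathcal{F},\mu)$. Given any $f\in K=P(B(\rho))$, pick a preimage $\theta\in B(\rho)$ with $P(\theta)=f$, and select $\theta_i$ from the parameter-space cover with $\|\theta-\theta_i\|_\infty\le \varepsilon/L$. Applying the $L$-Lipschitz assumption on $P$ gives
\[
    \mu\bigl(f,\,P(\theta_i)\bigr)
    \;=\;
    \mu\bigl(P(\theta),\,P(\theta_i)\bigr)
    \;\le\;
    L\,\|\theta-\theta_i\|_\infty
    \;\le\;
    L\cdot\frac{\varepsilon}{L}
    \;=\;
    \varepsilon.
\]
Taking the minimum over coverings yields $\mathcal{N}(\varepsilon,K)\le N$, which is the claimed bound.

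There is no real obstacle here: the result is essentially the standard functorial statement that a Lipschitz map sends $\varepsilon/L$-covers of the source to $\varepsilon$-covers of the image. The only mild subtlety is that $P$ need not be surjective onto any ball in $(\mathcal{F},\mu)$, so I would emphasize that the cover is built from preimages inside $B(\rho)$ rather than from a native cover of $K$. The rescaling $\varepsilon\leadsto\varepsilon/L$ is cleanly absorbed into the $\lfloor\log_2(\varepsilon/L)\rfloor$ exponent by the preceding lemma, so no additional calculation is required.
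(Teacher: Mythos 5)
Your proposal is correct and follows essentially the same route as the paper: take an $\varepsilon/L$-cover of $B(\rho)$ in $(\R^p,\|\cdot\|_\infty)$ using the preceding parameter-space lemma, push the centers forward through $P$, and use the Lipschitz bound to conclude that the images form an $\varepsilon$-cover of $K$, giving $\mathcal{N}(\varepsilon,K)\le \mathcal{N}(\varepsilon/L,\rho)\le \big(\rho\,2^{-\lfloor\log_2(\varepsilon/L)\rfloor}\big)^p$. Your write-up is in fact slightly cleaner than the paper's, since you explicitly compare an arbitrary $f\in K$ (via a preimage in $B(\rho)$) to a cover center rather than comparing pairs of pushed-forward centers.
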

\begin{proof}
Let $\{v_i\}_{i=1}^N$ be an $\varepsilon/L$-covering of $B(\rho)\subset(\R^p,\|\cdot\|_\infty)$; where $N\eqdef \mathcal{N}(\varepsilon/L, \rho)$.  
For each $i=1,\dots,N$ define $f_i = P(v_i)$.
Hence for $i,j\in\{1,...,N\}$, 
\begin{equation}
\label{eq:covering_bound}
        \mu(f_i,f_j)
    \le 
        \mu(P(v_i),P(v_j))
    \le 
        L\|x_i - x_j\|_\infty
    \le 
            L\varepsilon/L
        = \varepsilon.
\end{equation}
Hence $\{f_1,...,f_N\}$ is an $\varepsilon$-covering of $K$. By Eq. ~\eqref{eq:covering_bound}, we have
\[
\mathcal{N}(\varepsilon,K) \leq N = \mathcal{N}(\varepsilon/L,\rho) \leq \big(\rho\, 2^{-\lfloor \log_2(\varepsilon/L)\rfloor}\big)^{p}.
\qedhere
\]
\end{proof}




In what follows, we use $1_{\mathbb{R}^D}$ to denote the identity map on $\mathbb{R}^D$.
\begin{lemma}[Lipschitz Stability of Currying]
\label{lem:Lip_Loss}
Let $L_{\ell}\ge 0$ and $\ell:\mathbb{R}^D\times \mathbb{R}^D$.  Then the map 
\begin{align*}
\Lambda:
\big(C(\mathbb{R}^d,\mathbb{R}^D),\|\cdot\|_{\infty}\big)
& \rightarrow
\big(C(\mathbb{R}^d\times \mathbb{R}^D,[0,1]),\|\cdot\|_{\infty}\big)
\\
f &  \mapsto \ell\circ (f\times 1_{\mathbb{R}^D})
\end{align*}
is $L_{\ell}$-Lipschitz.
\end{lemma}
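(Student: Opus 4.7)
The plan is to unfold the definition of $\Lambda$ and reduce the claim to a pointwise application of the Lipschitz property of $\ell$ together with a supremum. Concretely, I would fix two continuous functions $f,g \in C(\mathbb{R}^d,\mathbb{R}^D)$, write out $(f \times 1_{\mathbb{R}^D})(x,y) = (f(x),y)$, and then expand
\[
    \|\Lambda(f) - \Lambda(g)\|_{\infty}
    =
    \sup_{(x,y) \in \mathbb{R}^d \times \mathbb{R}^D}\,
        \big|\ell(f(x),y) - \ell(g(x),y)\big|.
\]

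Next, I would apply the $L_\ell$-Lipschitz property of $\ell$ to the pair of points $(f(x),y)$ and $(g(x),y)$ in $\mathbb{R}^D \times \mathbb{R}^D$. Since the second coordinate $y$ is identical in both, the distance between the arguments reduces to $\|f(x)-g(x)\|$, giving the pointwise bound
\[
    \big|\ell(f(x),y) - \ell(g(x),y)\big|
    \le
    L_\ell \, \|f(x) - g(x)\|.
\]
Taking the supremum over $y$ (which the right-hand side is independent of) and then over $x$, I obtain $\|\Lambda(f) - \Lambda(g)\|_{\infty} \le L_\ell \, \|f - g\|_{\infty}$, which is exactly the desired $L_\ell$-Lipschitz estimate for $\Lambda$.

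The main, and indeed only, subtlety is interpretive rather than technical: one must specify in what sense $\ell$ is assumed $L_\ell$-Lipschitz (the paper fixes $\ell$ as a $1$-Lipschitz loss in Setting~\ref{setting}, so the Lipschitz estimate applies jointly, and in particular coordinatewise in the first argument as used here), and one must ensure the codomain $[0,1]$ is respected (which it is, by the hypothesis on $\ell$). No approximation arguments, covering numbers, or measure-theoretic care are needed: the result is a direct consequence of the defining inequality of a Lipschitz map, applied uniformly in $(x,y)$. Hence I expect no genuine obstacle in carrying out the proof — the lemma is essentially a bookkeeping statement that ``currying against a loss'' is a non-expansive-up-to-$L_\ell$ operation in the uniform norm, which will subsequently be used to transport covering number bounds on $\mathcal{F}$ to covering number bounds on the induced loss class.
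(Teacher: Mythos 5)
Your proposal is correct and follows essentially the same route as the paper's proof: fix two functions, apply the Lipschitz property of $\ell$ pointwise at $(f(x),y)$ and $(g(x),y)$, observe that the $y$-coordinates cancel so the distance reduces to $\|f(x)-g(x)\|$, and take suprema. The paper merely spells out the intermediate step via the Euclidean norm on the product $\mathbb{R}^D\times\mathbb{R}^D$ (i.e.\ $\sqrt{\|f(x)-\tilde f(x)\|^2+0}$), which you compress into the remark that the identical second coordinate drops out.
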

\begin{proof}
Let $x\in \mathbb{R}^d$, $y\in \mathbb{R}^D$, and $f,\tilde{f}\in C(\mathbb{R}^d,\mathbb{R}^D)$.  Then,
\allowdisplaybreaks
\begin{align*}
    \big|
            \ell\big(f(x),y\big)
        -
            \ell\big(\tilde{f}(x),y\big)
    \big|
\le &
    L_{\ell}
    \,
    \big\|
            \big(f(x),y\big)
        -
            \big(\tilde{f}(x),y\big)
    \big\|_2
\\
= &
    L_{\ell}
    \,
        \big\|(f(x)-\tilde{f}(x),y-y)\big\|_2
\\
= &
    L_{\ell}
    \,
        \sqrt{\big\|f(x)-\tilde{f}(x)\big\|^2+\|y-y\big\|_2^2}
\\
= &
    L_{\ell}
    \,
    \big\|f(x)-\tilde{f}(x)\big\|
\\
\le &
    L_{\ell}
    \,
    \sup_{x\in \mathbb{R}^d}\,
    \big\|f(x)-\tilde{f}(x)\big\|
\\
= &
    L_{\ell}
    \,
    \|f-\tilde{f}\|_{\infty}
.
\end{align*}    
Thus, for any $f,\tilde{f}\in C(\mathbb{R}^d,\mathbb{R}^D)$ we have
\[
        \big|
                \Lambda(f)
            -
                \Lambda(\tilde{f})
        \big|_{\infty}
    =
    \sup_{(x,y)\in \mathbb{R}^d\times \mathbb{R}^D}
        \big|
                \ell\big(f(x),y\big)
            -
                \ell\big(\tilde{f}(x),y\big)
        \big|
    \le
        L_{\ell}
        \,
        \|f-\tilde{f}\|_{\infty}
.
\]
Consequentially, $\Lambda$ is $L_{\ell}$-Lipschitz.
\end{proof}

\begin{lemma}[Covering Number Bounds: Random LoRA Class]
\label{lem:Covering_FunctionSpace}
Consider the setting of Lemma~\ref{lem:Lipschitz_bound_LoRA}. 
For each random initialization $\omega\in\Omega$ on the non-trainable parameters $(B^{(t)})_{t=1}^{T+1}$, define the LoRA function space $\mathcal{F}$ 
equipped with the metric induced by the uniform norm $\|\cdot\|_{\infty}$:
\[
    \mathcal{F} \eqdef \mathcal{F}(\omega,M)
    =
    \{ \text{LoRA}(\omega,(A^{(t)})_{t=1}^{T+1})\in C([0,1]^d,\R^D) : |A^{(t)}_{ij}| \leq M,\ \forall t,i,j \}.
\]
Then, for every $\epsilon,
\varepsilon>0$ the $\varepsilon$-covering number $\mathcal{N}(\varepsilon,\mathcal{F})$ of $\mathcal{F}$
satisfies
\begin{equation}
\label{eq:Prob_of_Covering}
        \mathbb{P}\Big(
            \mathcal{N}(\varepsilon,\mathcal{F})
            \leq
            \left((2R+2R_0)^{cT+1} / \varepsilon \right)^q
        \Big)
    \ge 
        \mathbb{P}(\mathcal{B}_{\epsilon})
    \ge 
        1-\epsilon
\end{equation}
where $c=c_2$, $R=M\nu\sqrt{2r\log(2W/\epsilon)}$ and $R_0=\|\theta_\text{pre}\|_\infty$ as in Lemma~\ref{lem:Bound_t:CondLipschitz}.
\hfill\\
\noindent
Moreover, when the loss function $\ell$ is $1$-Lipschitz, composing $\mathcal{F}$ with $\Lambda$ in Lemma \ref{lem:Lip_Loss} yields
\begin{equation}
\label{eq:Prob_of_Covering__fullrefined}
        \mathbb{P}\Big(
                \mathcal{N}(\varepsilon,\Lambda(\mathcal{F}))
            \leq
                \left((2R+2R_0)^{cT+1} / \varepsilon \right)^q
        \Big)
    \ge 
        \mathbb{P}(\mathcal{B}_{\epsilon})
    \ge 
        1-\epsilon
\end{equation}
\end{lemma}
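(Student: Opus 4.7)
The plan is to chain together three ingredients already in place: the high-probability Lipschitz bound from Lemma~\ref{lem:Lipschitz_bound_LoRA}, the abstract covering-number transfer in Lemma~\ref{lem:Mapping_Bound}, and the Lipschitz stability of the currying map $\Lambda$ from Lemma~\ref{lem:Lip_Loss}. The first observation is that the admissible trainable parameters $(A^{(t)})_{t=1}^{T+1}$ with $|A^{(t)}_{ij}|\le M$ form precisely the closed $\infty$-ball $B(M)\subset \mathbb{R}^q$, where $q=r(W(T-1)+d+D)$ is the effective dimension introduced in~\eqref{eq:effective_dimension_LoRA}. Thus, for each fixed $\omega$, the set $\mathcal{F}(\omega,M)$ is exactly the image of $B(M)$ under the (now deterministic) parameterization map $\operatorname{LoRA}(\omega,\cdot)$ defined in~\eqref{eq:LoRA}.

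Next, I restrict attention to the good event $\mathcal{B}_\epsilon$ of~\eqref{eq:conditional_control__LipschitzConstant}. By Lemma~\ref{lem:Lipschitz_bound_LoRA}, $\mathbb{P}(\mathcal{B}_\epsilon)\ge 1-\epsilon$, and on $\mathcal{B}_\epsilon$ the map $\operatorname{LoRA}(\omega,\cdot)$ is Lipschitz on $B(M)$ with constant $L\le 2^{cT}(R+R_0)^{cT}$. Applying Lemma~\ref{lem:Mapping_Bound} with $\rho=M$, ambient dimension $q$, and Lipschitz constant $L$ gives
\[
    \mathcal{N}(\varepsilon,\mathcal{F})\le \bigl(M\,2^{-\lfloor\log_2(\varepsilon/L)\rfloor}\bigr)^{q}.
\]
Using the elementary bound $2^{-\lfloor\log_2(\varepsilon/L)\rfloor}\le 2L/\varepsilon$ and substituting the estimate on $L$, a short computation shows that $2ML\le (2R+2R_0)^{cT+1}$: here one exploits the fact that $R=M\nu\sqrt{2r\log(2W/\epsilon)}$ already contains a factor of $M$, so in the relevant regime one has $R\ge M$, allowing the leading $M$ to be absorbed and the exponent bumped from $cT$ to $cT+1$. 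This yields the bound $\mathcal{N}(\varepsilon,\mathcal{F})\le ((2R+2R_0)^{cT+1}/\varepsilon)^{q}$ on $\mathcal{B}_\epsilon$, which is precisely~\eqref{eq:Prob_of_Covering}.

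For the refined statement~\eqref{eq:Prob_of_Covering__fullrefined}, I invoke Lemma~\ref{lem:Lip_Loss}: since $\ell$ is $1$-Lipschitz, the currying map $\Lambda$ is $1$-Lipschitz between the sup-norm function spaces $(C(\mathbb{R}^d,\mathbb{R}^D),\|\cdot\|_\infty)$ and $(C(\mathbb{R}^d\times\mathbb{R}^D,[0,1]),\|\cdot\|_\infty)$. Therefore any $\varepsilon$-cover of $\mathcal{F}$ maps under $\Lambda$ to an $\varepsilon$-cover of $\Lambda(\mathcal{F})$, whence $\mathcal{N}(\varepsilon,\Lambda(\mathcal{F}))\le \mathcal{N}(\varepsilon,\mathcal{F})$. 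The probabilistic statement transfers verbatim because all of the randomness resides in $\operatorname{LoRA}(\omega,\cdot)$, while $\Lambda$ is a deterministic, universally $1$-Lipschitz post-composition.

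The only real obstacle is constant-wrangling: reconciling the ball radius $M$, the ceiling operation $2^{-\lfloor\log_2(\cdot)\rfloor}$ from Lemma~\ref{lem:Mapping_Bound}, and the precise exponent $cT+1$ on the right-hand side of~\eqref{eq:Prob_of_Covering}. Conceptually, however, nothing new is required beyond the three cited lemmas; the probability $1-\epsilon$ is inherited directly from the event $\mathcal{B}_\epsilon$, and the covering exponent $q$ matches the trainable-parameter count because the domain of the parameterization is the $q$-dimensional box $[-M,M]^q$.
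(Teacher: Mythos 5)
Your proof shares the same three ingredients as the paper (Lemmata~\ref{lem:Lipschitz_bound_LoRA}, \ref{lem:Mapping_Bound}, and \ref{lem:Lip_Loss}) and has the same overall shape, but it diverges at exactly one step, and that step leaves a genuine gap. You apply Lemma~\ref{lem:Mapping_Bound} with $\rho = M$ (the radius of the raw $q$-dimensional parameter box $[-M,M]^q$), obtaining $\mathcal{N}(\varepsilon,\mathcal{F})\le (2ML/\varepsilon)^q$ with $L\le (2R+2R_0)^{cT}$, and then try to absorb the leading factor $2M$ into $(2R+2R_0)$ to bump the exponent from $cT$ to $cT+1$. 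That requires $M\le R+R_0$, which you justify by noting that $R = M\nu\sqrt{2r\log(2W/\epsilon)}$ contains a factor of $M$. But $R\ge M$ only holds when $\nu\sqrt{2r\log(2W/\epsilon)}\ge 1$, which Setting~\ref{setting} does not guarantee (take $\nu$ small, $r=1$, $\epsilon$ not small, and $\|\theta_{\text{pre}}\|_\infty$ small). ``In the relevant regime'' is not a proof of the stated unconditional inequality.

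The paper avoids this entirely by plugging $\rho = R$, not $\rho = M$, into the covering estimate, so the final step is the trivial inequality $2R\le 2R+2R_0$. Conceptually, the reason $R$ is the right radius is that the random linear pre-composition $(A^{(t)})_t\mapsto (B^{(t)}A^{(t)})_t$ maps the box $[-M,M]^q$ into the $\infty$-ball of radius $R$ in the ambient weight-perturbation space (equation~\eqref{eq:LoRA:bound}), which is precisely the domain on which the Lipschitz constant of $P_{T,W}^{\theta_{pre}}$ is estimated. Equivalently, that linear map has operator norm of order $R/M$ on this event, and this factor is exactly what cancels the stray $M$ in your bound. To repair your argument you can either switch to $\rho=R$ as in the paper, or keep $\rho=M$ but then track the operator-norm contribution of the linear pre-composition inside the Lipschitz constant of the full composition (this re-introduces the $R/M$ factor and the $M$ cancels). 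Your treatment of the $\Lambda$-composed class via Lemma~\ref{lem:Lip_Loss} is correct.
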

\begin{proof}
This is a direct consequence of Lemmata Lemmata~\ref{lem:Lipschitz_bound_LoRA} and~\ref{lem:Mapping_Bound}: with probability at least $1-\epsilon$, we have
\begin{align*}
    \mathcal{N}(\varepsilon,\mathcal{F})
    &\leq
    \left(R\cdot 2^{-\lfloor \log_2(\varepsilon/L_\text{LoRA}^{\theta_\text{pre}}) \rfloor}\right)^q\\
    &\leq
    \left(2R\cdot L_\text{LoRA}^{\theta_\text{pre}} / \varepsilon \right)^q\\
    &\leq 
    \left(2R\cdot (2R+2R_0)^{cT} / \varepsilon \right)^q\\
    &\leq 
    \left((2R+2R_0)^{cT+1} / \varepsilon \right)^q
.
\qedhere
\end{align*}
\end{proof}

\subsubsection{Step 3 - Generalization Bounds Conditioned on Covering Number Bounds}

Next, we control the randomized generalization gap $\mathbf{G}$, as defined in~\eqref{eq:rand_gen}, conditioned on the event $\mathcal{B}_{\epsilon}$, as defined in~\eqref{eq:conditional_control__LipschitzConstant}.  Upon conditioning on the right realizations, which happen with high probability, our covering number bounds give us access to the Dudley's integral estimate; see e.g.~\citep[Corollary 2.2.8]{varderVaartWellner_EmpiricalProcessesBook_1996}.

\begin{lemma}[Conditional Generalization Bounds for ``Derandomized'' LoRAs]
\label{lem:Bound_t:CondLipschitz}
Consider the setting of Lemma~\ref{lem:Lipschitz_bound_LoRA}.  Let  $G
    \eqdef
    \sup_{f\in \mathcal{F}}\,
        \big|
            \mathcal{R}(f)
            -
            \mathcal{R}^N(f)
        \big|$
be the generalization error. The following holds
\begin{equation}
\label{eq:generalization_conditional__GoodCovering}
    \mathbb{P}\biggl(
                \mathbf{G} 
            \leq 
                4\min \left\{  1, 
            \sqrt{\rho}
            \frac{6\sqrt{A}}{\sqrt{N}}  \right\} + \sqrt{\frac{8\log (2/\epsilon)}{N}}
    \Big|
        \,
        \mathcal{B}_{\epsilon}
    \biggr)
\ge 
    1-\epsilon
\end{equation}
where $A=(cT+1)\log(2R+2R_0)$, $R=M\nu\sqrt{2r\log(2W/\epsilon)}$ and $R_0=\|\theta_\text{pre}\|_\infty$ and $\rho\eqdef \rho(q,N)\eqdef q$.
\end{lemma}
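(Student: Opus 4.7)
The plan is to run the classical empirical-process chaining argument \emph{conditionally} on the high-probability event $\mathcal{B}_{\epsilon}$ defined in~\eqref{eq:conditional_control__LipschitzConstant}. The decisive observation is that, once we fix any realization $\omega\in\mathcal{B}_{\epsilon}$, the class $\Lambda(\mathcal{F}(\omega,M))$ becomes a \emph{deterministic} class of $[0,1]$-valued functions; moreover, because the training sample $\{(X_n,Y_n)\}_{n=1}^N$ is independent of the random LoRA factors $\{B^{(t)}\}$ (and hence of $\omega$), the sample remains i.i.d.\ from $\mathbb{P}$ under the conditional law. Thus all the standard Rademacher/Dudley/McDiarmid machinery can be applied pathwise.

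On $\mathcal{B}_{\epsilon}$, the loss-composed covering bound~\eqref{eq:Prob_of_Covering__fullrefined} from Lemma~\ref{lem:Covering_FunctionSpace} reads
\[
    \log\mathcal{N}(\varepsilon,\Lambda(\mathcal{F})) \le q\bigl(A+\log(1/\varepsilon)\bigr),\qquad A=(cT+1)\log(2R+2R_0).
\]
First, I would use the standard symmetrization inequality to get $\mathbb{E}[\mathbf{G}\mid \mathcal{B}_{\epsilon}] \le 2\mathcal{R}_N$, where $\mathcal{R}_N$ denotes the (conditional) expected Rademacher complexity of $\Lambda(\mathcal{F})$. Next, I would apply Dudley's entropy integral~\citep[Corollary 2.2.8]{varderVaartWellner_EmpiricalProcessesBook_1996}:
\[
    \mathcal{R}_N \le \frac{12}{\sqrt{N}}\int_0^1 \sqrt{\log\mathcal{N}(\varepsilon,\Lambda(\mathcal{F}))}\,d\varepsilon \le \frac{12\sqrt{q}}{\sqrt{N}}\int_0^1 \sqrt{A+\log(1/\varepsilon)}\,d\varepsilon,
\]
and control the last integral by $\sqrt{A}+\sqrt{\pi}/2$. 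Combined with the trivial bound $\mathcal{R}_N \le 1$ (since $\ell\in[0,1]$), this yields the ``in expectation'' estimate
\[
    \mathbb{E}[\mathbf{G}\mid \mathcal{B}_{\epsilon}] \le 4\min\Bigl\{1,\ \sqrt{q}\,\tfrac{6\sqrt{A}}{\sqrt{N}}\Bigr\},
\]
after absorbing the numerical constants from symmetrization and the entropy-integral estimate into the factors $4$ and $6$.

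Finally, I would upgrade ``in expectation'' to ``with high probability'' by applying McDiarmid's bounded differences inequality to the functional $((X_n,Y_n))_n\mapsto \mathbf{G}$. Because $\ell\in[0,1]$, replacing a single sample shifts $\mathbf{G}$ by at most $2/N$; McDiarmid then produces a sub-Gaussian tail of order $\exp(-\Omega(Nt^2))$, and solving for $t$ at failure level $\epsilon$ yields exactly the additive term $\sqrt{8\log(2/\epsilon)/N}$. Assembling the two pieces delivers~\eqref{eq:generalization_conditional__GoodCovering}. The main (and essentially only) technical obstacle is bookkeeping: tracking numerical constants carefully enough so that the Dudley step contributes precisely the factor $6$, the symmetrization and truncation together contribute the factor $4$, and the $\min\{1,\cdot\}$ appears at the correct stage. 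No step raises a conceptual difficulty once we have reduced, via conditioning on $\mathcal{B}_{\epsilon}$, to a deterministic class of bounded functions over an i.i.d.\ sample.
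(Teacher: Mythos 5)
Your proposal is correct and follows essentially the same route as the paper: conditioning on $\mathcal{B}_{\epsilon}$ so the covering bound of Lemma~\ref{lem:Covering_FunctionSpace} applies, bounding the Rademacher complexity via Dudley's entropy integral, and adding a McDiarmid-type tail to obtain the $\sqrt{8\log(2/\epsilon)/N}$ term. The only cosmetic difference is that the paper invokes the Bartlett--Mendelson bound (which packages symmetrization and the concentration step into one high-probability statement in terms of the \emph{empirical} Rademacher complexity) together with a truncated entropy integral optimized over the truncation level, whereas you perform symmetrization in expectation, integrate Dudley from $0$, and apply McDiarmid separately; both incur the same mild constant-absorption slack (e.g.\ your $\sqrt{A}+\sqrt{\pi}/2$ versus the paper's $\sqrt{A+\log 2+1}$ step).
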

\begin{proof}

Let $\hat{\mathrm{R}}(\Lambda\circ \mathcal{F})$ denote the \textit{empirical Rademacher complexity}, computed from the $N$ samples $\{(X_n,Y_n)\}_{n=1}^N$ for the class $\Lambda\circ \mathcal{F}\eqdef \{\ell\circ (f\times 1_{\mathbb{R}^D}:\, f\in \mathcal{F}\}$.
By~\cite[Theorem 8]{bartlett2002rademacher}, for every $\epsilon\in (0,1]$, the following holds with probability at-least $1-\epsilon$
\begin{equation}
\label{eq:Rademacher_Bound}
    \mathbf{G}
\le 
    \underbrace{
        2\,\hat{\mathrm{R}}(\Lambda\circ \mathcal{F})
    }_{\term{t:ERC}}
    +
    \sqrt{\frac{
        8\log(2/\epsilon)
    }{
        N
    }}.
\end{equation}

To bound the term (\ref{t:ERC}) in Eq. (\ref{eq:Rademacher_Bound}), we apply~\cite[Proposition 5.17]{Wainwright_2019_HDStat} to obtain:
\[
            \hat{\mathrm{R}}(\Lambda\circ \mathcal{F})
     \le 
    \inf_{t\in [0,c^{\star}/2]}\, 
        4t
    + 
        \frac{12}{\sqrt{N}}
            \int_t^{c^{\star}/2} 
                \sqrt{\log\mathcal{N}(\varepsilon,\Lambda\circ\mathcal{F})}
                \,
            d\varepsilon
\]
where $c^{\star}:=\sup_{f\in \mathcal{F}}\, \frac1{m}\,(\sum_{i=1}^m\,(\Lambda\circ f)^2(z_i))^{1/2}$. Note that the loss $\ell$ takes value between $[0,1]$, hence $c^*\leq 1$ and we have:
\begin{equation} \label{eq:rademacher}
    \hat{\mathrm{R}}(\Lambda\circ \mathcal{F})
     \le        
    \inf_{t\in [0,1/2]}\, 
        4t
        \,
    + 
        \frac{12}{\sqrt{N}}
            \int_t^{1/2} 
                \sqrt{\log\mathcal{N}(\varepsilon,\Lambda\circ\mathcal{F})}
                \,
            d\varepsilon.
\end{equation}
Conditioned on the event $\mathcal{B}_{\epsilon}$, we have
\[
    \sqrt{\log\mathcal{N}(\varepsilon,\Lambda\circ\mathcal{F})}
    \leq 
    \sqrt{\log\left((2R+2R_0)^{cT+1} / \varepsilon \right)^q}
    \leq 
    \sqrt{q(A+\log\varepsilon^{-1})}.
\]
where we denote $A = {(cT+1)\log(2R+2R_0)}$.
To bound the RHS of Eq. (\ref{eq:rademacher}), We will find the optimum $t^*\in[0,1/2]$ of the function $g(t)= 4t + \frac{12}{\sqrt{N}} \int_t^{1/2} \sqrt{q(A+\log\varepsilon^{-1})} \,d\varepsilon.$ By first order optimality condition:
\begin{align*}
    g'(t) &= 0\\
    4 - \frac{12}{\sqrt{N}}\sqrt{q(A+\log t^{-1})} &= 0\\
    t &= \exp \left( A - \frac{N}{9q} \right)
\end{align*}
If $N^2\gg q$, then $t^*\approx 0$. In such case, we have
\begin{align*}
    \hat{\mathrm{R}}(\Lambda\circ \mathcal{F})
    &\leq
    \frac{12}{\sqrt{N}} \int_0^{1/2} \sqrt{q(A+\log\varepsilon^{-1})}\, d\varepsilon\\
    &\leq
    \frac{12}{\sqrt{N}} \sqrt{\int_0^{1/2} q(A+\log\varepsilon^{-1})\, d\varepsilon}\\
    &=
    \frac{12}{\sqrt{N}} \sqrt{\frac{q}{2}(A+\log 2 +1)}\\
    &\leq
    \frac{12}{\sqrt{N}} \sqrt{qA}.
\end{align*}
If $N^2\ll q$, then $t^*\approx 1/2$. In such case, we have $\hat{\mathrm{R}}(\Lambda\circ \mathcal{F}) \leq 2$.
In any case, we have 
\[
\hat{\mathrm{R}}(\Lambda\circ \mathcal{F}) \leq \min \left\{  2, \frac{12\sqrt{qA}}{\sqrt{N}}  \right\}.
\]
Plug in the above result to Eq. (\ref{eq:Rademacher_Bound}) the following holds with conditional probability, conditioned on $\mathcal{B}_{\epsilon}$, at least $1-2\epsilon$, over the random draw of $\{(X_n,Y_n)\}_{n=1}^N$ the randomized generalization error is bounded by:
\[
        \mathbf{G} 
    \leq
        4\min \left\{  1, \frac{6\sqrt{qA}}{\sqrt{N}}  \right\} + \sqrt{\frac{8\log (2/\epsilon)}{N}}
    =
        4\min \left\{  1, 
    \sqrt{\rho}
    \frac{6\sqrt{A}}{\sqrt{N}}  \right\} + \sqrt{\frac{8\log (2/\epsilon)}{N}}
\]
where $A=(cT+1)\log(2R+2R_0)$, $R=M\nu\sqrt{2r\log(2W/\epsilon)}$, $R_0=\|\theta_\text{pre}\|_\infty$, and we recall that $\rho\eqdef q$.
\end{proof}

\subsubsection{Step 4 - Generalization Bounds Conditioned on Covering Number Bounds}
\label{s:Proof_Main_UB__ss:Proof___sss:PuttingTogether}

We are now ready to deduce our main theorem.

\begin{proof}[{Proof of Theorem~\ref{thrm:Main_UB}}]
Let $\epsilon\in (0,1]$ be given; to be determined retroactively.  
As in~\eqref{eq:Prob_of_Covering__fullrefined} and~\eqref{eq:generalization_conditional__GoodCovering} we the following quantities
\[
        G^{\star}
    \eqdef 
        4\min \left\{  1, 
            \sqrt{\rho}
            \frac{6\sqrt{A}}{\sqrt{N}}  \right\} + \sqrt{\frac{8\log (2/\epsilon)}{N}}
\mbox{ and }
        \mathcal{N}^{\star}
    \eqdef 
        \mathbb{P}\Big(
                \mathcal{N}(\varepsilon,\Lambda(\mathcal{F}))
            \leq
                \left((2R+2R_0)^{cT+1} / \varepsilon \right)^q
.
\]  Then, the following holds
\allowdisplaybreaks
\begin{align*}
     \mathbb{P}\big(
            \mathbf{G}
        \le 
            G^{\star}
     \big)
= &  
    \mathbb{P}\big(
            \mathbf{G}
        \le 
            G^{\star}
        \big|
            \mathcal{B}_{\epsilon}
     \big)
     \mathbb{P}(\mathcal{B}_{\epsilon})
+  
    \mathbb{P}\big(
            \mathbf{G}
        \le 
            G^{\star}
        \big|
            \mathcal{B}_{\epsilon}
     \big)
     \mathbb{P}(\Omega\setminus \mathcal{B}_{\epsilon}^c)
\\
& \ge 
    \mathbb{P}\big(
            \mathbf{G}
        \le 
            G^{\star}
        \big|
            \mathcal{B}_{\epsilon}
     \big)
     \mathbb{P}(\mathcal{B}_{\epsilon})
\\
\numberthis
\label{eq:conditionedal_bound}
& \ge 
    (1-\epsilon)
     \mathbb{P}(\mathcal{B}_{\epsilon})
\\
\numberthis
\label{eq:conditioneding_prob}
& \ge 
    (1-\epsilon)
     (1-\epsilon)
\end{align*}
where~\eqref{eq:conditionedal_bound} holds by Lemma~\ref{lem:Bound_t:CondLipschitz} and~\eqref{eq:conditioneding_prob} holds by Lemma~\ref{lem:Covering_FunctionSpace}.  This completes our proof.  Let $0<\delta \le 1$ and set $1-\delta = (1-\epsilon)^2$.  Solving for $\epsilon$ in $(0,1]$ yields the conclusion; namely, with $\epsilon = 1-\sqrt{\delta}$.
\end{proof}

\section{Proof of Optimality: Matching Lower Bounds on the Sample Complexity}
\label{s:Proofs__ss:LB}

\begin{proof}[{Proof of Theorem~\ref{thrm:LBs}}]
Fix $M>0$ and $\delta \in (0,1)$.  
Let $p\in (0,1)$, $\mathbb{P}_X$ any sampling distribution satisfying Assumption~\ref{ass:BreakingDistribution} (of which we know on exists by Example~\ref{ex:breaking}).  
Recall that $\mathbb{P}=\mathbb{P}_X\otimes\delta_0$.  
Suppose also that $\sigma=\operatorname{ReLU}$, (for simplicity) $b^{(1)}=0$ for each $t=1,\dots,T+1$ (where we write $0$ for the appropriate-dimensional zero vector), and let $\ell(\hat{y},y)\eqdef |\hat{y}-y|$.
\hfill\\
\noindent\textbf{Step 1 - The MLP Construction:}
\hfill\\
For each $t=1,\dots,T+1$ suppose that $d_{t+1} \ge d_t$ let $\bar{I}_{d_{t+1},d_t}\eqdef I_{d_t}\oplus \mathbf{0}_{d_{t+1}-d_t}$ denote the direct-sum of the $d_t\times d_t$ identity matrix $I_{d+t}$ and the $(d_{t+1}-d_t)\times (d_{t+1}-d_t)$ zero matrix $\mathbf{0}_{d_{t+1}-d_t}$ if $d_{t+1}>d_t$ and $\bar{I}_{d_{t+1},d_t}\eqdef I_{d_t}$ if $d_{t+1}=d_t$.  
For each $t=1,\dots,T+1$ define $A^{(t)}$ 
\begin{equation}
\label{eq:lower_bound_weighting}
        A^{(t)}
    \eqdef 
        {B^{(t)}}^{\dagger}\,(\bar{I}_{d_{t+1},d_t} - W^{(t)})
\end{equation}
where ${B^{(t)}}^{\dagger}$ denotes the Moore-Penrose pseudo-inverse of $B^{(t)}$ (which we recall, is a right-inverse of $B^{(t)}$ if $B^{(t)}$ has full row-rank).

By construction, we have that: for each $(x,y)\in \operatorname{supp}(\mathbb{P})$ 
$\operatorname{ReLU}\bullet ((W^{(1)}+B^{(1)}A^{(1)})x^{(1)} + b^{(0)}) \ge 0$ (where $\ge$ is applied componentwise, thus we are considering the product order on $\mathbb{R}^{d_2}$); whence, the following holds
\begin{align}
        \operatorname{ReLU}\bullet ((W^{(1)}+B^{(1)}A^{(1)})x^{(1)} + b^{(0)}) 
    & =
        ((W^{(1)}+B^{(1)}A^{(1)})x^{(1)} + b^{(0)}) 
\\
    & =
        (W^{(1)}+B^{(1)}A^{(1)})x^{(1)} 
\\
    & = 
        \big(W^{(1)} + 
        B^{(1)}
        {B^{(1)}}^{\dagger}\,(\bar{I}_{d_{t+1},d_t} - W^{(1)})\big) x^{(1)}
    =
        x^{(1)}
.
\end{align}
where the right-hand inequality held by definition of $A^{(t)}$ in~\eqref{eq:lower_bound_weighting} (assuming that $B^{(t)}$ has \textit{full rank}).

Iterating the above argument and appealing to the recursive representation of $\hat{f}$ in~\eqref{eq:LoRA}, we deduce that
\[
    \hat{f}(x) = x
\]
holds for each $(x,y)\in \operatorname{supp}(\mathbb{P})$.  By our specifications on $\ell$ and $\mathbb{P}$ we have that: for each $(x,y)\in \operatorname{supp}(\mathbb{P})$
\[
\ell(\hat{f}(x),y) = |\hat{f}(x)-0| = |x-0| = x
.
\]
Since $(X_1,Y_1),\dots,(X_N,Y_N)\sim \mathbb{P}$ are i.i.d.\ random vectors satisfying with law $\mathbb{P}$ then, we have that
\begin{equation}
\label{eq:LB_Risk_and_EmpiricalRisk}
        \mathcal{R}(\hat{f}) = \mathbb{E}_{X\sim \mathbb{P}_X}[X]
    \mbox{ and }
        \mathcal{R}^N(\hat{f}) = \frac1{N}\, \sum_{n=1}^N\,X_n
.
\end{equation}
Consequentially, we find that
\begin{equation}
\label{eq:LB_via_CLTSetup}
\sup_{f\in \mathcal{F}}\,
        \big|
            \mathcal{R}(f)
            -
            \mathcal{R}^N(f)
        \big|
\ge 
    \big|
        \mathcal{R}(\hat{f})-\mathcal{R}^N(\hat{f})
    \big|
=
    \Big|
            \mathbb{E}_{X\sim \mathbb{P}_X}[X]
        -
            \frac1{N}\, \sum_{n=1}^N\,X_n
    \Big|
.
\end{equation}

\hfill\\
\noindent\textbf{Step 2 - Weight Admissibility:}
\hfill\\
\textit{We now need to verify that the choice of weights in~\eqref{eq:lower_bound_weighting} is indeed valid, meaning that the operator norm of the trainable LoRA factors $A^{(1)},\dots,A^{(T+1)}$ are indeed bounded above by $M$ (with some acceptable probability $1-\delta/2$).}
\hfill\\
Assuming that $M$ and $R$ are large enough so that $A^{(t)}$ is an admissible choice with positive probability.
We directly see that: for each $t=1,\dots,T+1$ 
\begin{align}
\label{eq:operator_norm_bound}
        \|A^{(t)}\|_{op}
    & \le
        \|
            {B^{(t)}}^{\dagger}
        \|_{op}
        \,
        \|
            (\bar{I}_{d_{t+1},d_t} - W^{(t)})
        \|_{op}
\\
\nonumber
    & \le
        \|
            {B^{(t)}}^{\dagger}
        \|_{op}
        \,
        \big(
            1
            +
            \|W^{(t)}\|_{op}
        \big)
\\
\label{eq:singular_value_min_bound__ok}
    & =
        \frac1{
            s_{\min}(B^{(t)})
        }
        \,
        \big(
            1
            +
            \|W^{(t)}\|_{op}
        \big)
\end{align}
where, for each $t=1,\dots,T+1$, $s_{\min}(B^{(t)})$ denotes the smallest singular value of $B^{(t)}$ respectively.
Since, for each $t=1,\dots,T+1$, each $B^{(t)}$ has i.i.d.\ standard normal entries.
Consider the constant $C_{\operatorname{pre}}\ge 0$, depending only on the pre-trained weight matrices $\{W^{(t)}\}_{t=0}^T$ by
\[
        \tilde{C}_{\operatorname{pre}}
    \eqdef 
        \max_{t=0,\dots,T}\, \|W^{(t)}\|_{op}
.
\]
Now, the upper-bound in~\eqref{eq:operator_norm_bound}-\eqref{eq:singular_value_min_bound__ok} becomes
\begin{align}
\label{eq:singular_value_min_bound}
        \|A^{(t)}\|_{op}
    \le
        \frac1{
            s_{\min}(B^{(t)})
        }
        \,
        \big(
            1
            +
            \|W^{(t)}\|_{op}
        \big)
    \le
        \frac1{
            s_{\min}(B^{(t)})
        }
        \,
        (1+\tilde{C}_{\operatorname{pre}})
    =
        \frac{
            C_{\operatorname{pre}}
        }{
            s_{\min}(B^{(t)})
        }
\end{align}
where $C_{\operatorname{pre}}\eqdef \tilde{C}_{\operatorname{pre}}+1 > 0$.
Then, by Gordon's Theorem, as formulated in~\citep[Exercise 7.3.4]{vershynin2010introduction}, we have that: for each $t=1,\dots,T+1$ we have that and every $\eta>0$
\begin{equation}
\label{eq:Gordon_bound}
    \mathbb{P}\big(
            \sqrt{d_{t+1}} 
            -
            \sqrt{r} 
            -
            \eta
        >
            s_{\min}(B^{(t)})
    \big)
    \le 
    2\,e^{-c\eta^2}
.
\end{equation}
Taking a union bound, we have that: for every $\eta>0$
\begin{equation}
\label{eq:Gordon_bound__union_bound}
        \mathbb{P}\Big(
            \neg\,
            (\forall \, t=1,\dots,T+1)
            \,
                \sqrt{d_{t+1}} 
                -
                \sqrt{r} 
                -
                \eta
            \le 
                s_{\min}(B^{(t)})
        \Big)
    \le
        2 (T+1)e^{-c\eta^2}
.
\end{equation}
By the contracting widths condition in~\eqref{eq:dimension_condition}, the interval $(0,\eta_{\star})$ is non-empty.  Whence, for every $0<\eta<\eta_{\star}$ the left-hand side of the following bound quantity, in the sense that it is strictly positive
\begin{equation}
\label{eq:M_prep}
        \min\limits_{t=0,\dots,T+1}
        \,
            \sqrt{d_{t+1}}
            -
            \sqrt{r} 
            -
            \eta
    \le 
        \min\limits_{t=0,\dots,T+1}\,
            s_{\operatorname{min}}(B^{(t)})
.
\end{equation}
Thus, for each $\eta\in (0,\eta_{\star})$ the following quantity is a well-defined positive number
\begin{equation}
\label{eq:M_defined}
    0
<
    M(\eta)
\eqdef 
    \frac{1}{
        \min\limits_{t=0,\dots,T+1}
        \,
            \sqrt{d_{t+1}}
            -
            \sqrt{r}
            -
            \eta
    }
    =
    \max\limits_{t=0,\dots,T+1}
        \,
        \frac{1}{
            \sqrt{d_{t+1}}
            -
            \sqrt{r}
            -
            \eta
        }
<
    \infty
.
\end{equation}
Together,~\eqref{eq:M_defined},~\eqref{eq:Gordon_bound__union_bound}, and~\eqref{eq:Gordon_bound} imply that: for each $0<\eta<\eta_{\star}$ we have
\begin{equation}
\label{eq:non_vaccouns_Mbound__etaform}
        \mathbb{P}\Big(
                0
            <
            \max\limits_{t=0,\dots,T+1}
                \frac1{s_{\min}(B^{(t)})}
            \le
                M(\eta)
            <
                \infty
        \Big)
    \le
        2(T+1)e^{-c\eta^2}
.
\end{equation}
If $\delta$ is small enough, namely if $
2(T+1)e^{-
c
\eta^2}<\delta 
< 2(T+1) 
$ then setting $\frac{\delta}{2}= (T+1)e^{-c\eta^2}$ implies that the condition $0<\eta<\eta_{\star}$ required for the non-vacuousness (positivity) of the left-hand side of~\eqref{eq:M_prep} is fulfilled.  
Recalling our interest in the minimal singular value of each random LoRA factor given by the inequalities in~\eqref{eq:operator_norm_bound}-\eqref{eq:singular_value_min_bound}, then~\eqref{eq:non_vaccouns_Mbound__etaform} can be transcribed as
\begin{equation}
\label{eq:non_vaccouns_Mbound__DeltaForm}
        \mathbb{P}(
            \mathcal{A}
        )
    \ge 
        1- \frac{\delta}{2}
\end{equation}
where we abbreviate the event 
\[
        \mathcal{A}
    \eqdef 
        \Big\{
            \omega\in \Omega
            :
            \,
                    0
                <
                    \max\limits_{t=0,\dots,T+1}
                        \|A^{(t)}(\omega)\|_{op}
                    \le
                        M(\eta)
                        C_{\operatorname{pre}}
                <
                    \infty
        \Big\}
.
\]
\hfill\\
\noindent\textbf{Step 3 - Anti-Concentration:}
\hfill\\
\noindent For each $n=1,\dots,N$, define
\begin{equation}
\label{eq:COV_xi_to_X}
    \xi_n
\eqdef 
    2(X_n - 1)
\end{equation}
note that, $\mathbb{E}[\xi_n]=0$, that $\mathbb{E}[(\xi_n
-\mathbb{E}[\xi_n]
)^2]=1$, and that $\mathbb{E}[\|\xi_n\|^3]\le 1$.
We may thus apply~\citep[Theorem 4.1]{LittlewoodOffordProblem_VershyninRudelson_2008} (with $\kappa=N/4$ and $\alpha=1/3$ as in~\citep[Remark 1]{LittlewoodOffordProblem_VershyninRudelson_2008}) to obtain the following estimate for the \textit{small ball probability}: for every $t>0$
\begin{equation}
\label{eq:SmallBall}
    p_t
\eqdef
    \sup_{v\in \mathbb{R}}
    \,
        \mathbb{P}\biggl(
                \big|
                    \sum_{n=1}^N\,
                        \xi_n
                    -
                    v
                \big|
            \le
                t
        \biggr)
\lesssim 
    \frac{(t+1)}{\sqrt{N}}
\end{equation}
where $\lesssim$ hides absolute constants (i.e.\ constants which do not depend on $N$).
We thus, relatively directly, deduce the following anti-concentration inequality
\begin{equation}
\label{eq:anti_concentration__in_xi}
        \mathbb{P}\biggl(
                \big|
                    \sum_{n=1}^N\,
                        \xi_n
                \big|
            \le
                t
        \biggr)
    =
        \mathbb{P}\biggl(
                \big|
                    \sum_{n=1}^N\,
                        \xi_n
                    -
                    \mathbb{E}[\xi]
                \big|
            \le
                t
        \biggr)
\le
    p_t
\lesssim 
    \frac{(t+1)}{\sqrt{N}}
.
\end{equation}
Using~\eqref{eq:COV_xi_to_X} and~\eqref{eq:anti_concentration__in_xi} to conclude that
\begin{equation}
\label{eq:anti_concentration__in_X}
        \mathbb{P}\biggl(
                \Big|
                    \frac1{N}
                    \,\sum_{n=1}^N\,
                        X_n
                    -
                        \mathbb{E}[X_1]
                \Big|
            \le
                \frac{t}{2N}
        \biggr)
    =
        \mathbb{P}\biggl(
                \Big|
                    \sum_{n=1}^N\,
                        \xi_n
                    -
                    \mathbb{E}[\xi]
                \Big|
            \le
                t
        \biggr)
\le
    p_{t/(2N)}
\lesssim 
    \frac{(t+1)}{\sqrt{N}}
.
\end{equation}
Setting $t=2$, and absorbing a constant factor of $2$ into $\lesssim$, then~\eqref{eq:anti_concentration__in_X} implies that
\begin{equation}
\label{eq:anti_concentration__Nearly_Done}
        \mathbb{P}\biggl(
                \Big|
                    \frac1{N}
                    \,\sum_{n=1}^N\,
                        X_n
                    -
                        \mathbb{E}[X_1]
                \Big|
            >
                \frac1{N}
        \biggr)
\ge
    1- c\,\frac{1}{\sqrt{N}}
\end{equation}
for some absolute constant $c>0$ (i.e.\ not depending on $N$).
We are now prepared to conclude.
\hfill\\
\noindent\textbf{Step 4 - Putting it All Together:}
\hfill\\
Now, from~\eqref{eq:LB_via_CLTSetup},~\eqref{eq:non_vaccouns_Mbound__DeltaForm}, and~\eqref{eq:anti_concentration__Nearly_Done}, and upon conditioning we have that
\allowdisplaybreaks
\begin{align}
    \mathbb{P}\biggl(
        \sup_{f\in \mathcal{F}}\,
            \big|
                \mathcal{R}(f)
                -
                \mathcal{R}^N(f)
            \big|
        >
            \frac{1}{N}
    \biggr)
& \ge 
    \mathbb{P}\biggl(
        \sup_{f\in \mathcal{F}}\,
            \big|
                \mathcal{R}(f)
                -
                \mathcal{R}^N(f)
            \big|
        >
            \frac{1}{N}
    \Big|
        \mathcal{A}
    \biggr)
    \,
    \mathbb{P}(\mathcal{A})
\\
& \ge 
    \mathbb{P}\biggl(
            \Big|
                \frac1{N}
                \,\sum_{n=1}^N\,
                    X_n
                -
                    \mathbb{E}[X_1]
            \Big|
        \ge
            \frac1{N}
    \biggr)
    \,
    \mathbb{P}(\mathcal{A})
\\
& \ge 
    \mathbb{P}\biggl(
            \Big|
                \frac1{N}
                \,\sum_{n=1}^N\,
                    X_n
                -
                    \mathbb{E}[X_1]
            \Big|
        \ge
            \frac1{N}
    \biggr)
    \,
    \Big(
        1- \frac{\delta}{2}
    \Big)
\\
& \ge 
    \Big(
        1- c\,\frac{1}{\sqrt{N}}
    \Big)
    \,
    \Big(
        1- \frac{\delta}{2}
    \Big)
.
\end{align}
This concludes our proof.   
\end{proof}

\section{Standard Rademacher Bounds}
\label{a:Rademchar}
This appendix presents a specific formulation of standard generalization bounds based on the empirical \textit{Rademacher} complexity of a hypothesis class, which, despite being folklore, we were (surprisingly) unable to locate in the literature. 
Given $S = \{x_1, \ldots, x_m\}$, we consider the metric space $(\mathcal{R}_S, L_S^2)$, where the metric $L_S^2$ is defined as
\[
L_S^2(f, g) := \sqrt{\frac{1}{m} \sum_{i=1}^m (f(x_i) - g(x_i))^2}
\]
for all $f, g \in \mathcal{R}_S$.

\begin{theorem}
Consider any function class $\mathcal{F}$ containing functions $f : \R^n \to \R$. 
Let $h$ be the zero function such that $h(x) = 0$ for all $x \in S$. Suppose $B := \sup_{f \in C} L_S^2(f, h)$.
Let $\hat{R}_n(\mathcal{F})$ be the empirical Rademacher complexity of function class $\mathcal{F}$. Then
\begin{equation}
\hat{R}_n(\mathcal{F}) \leq \inf_{\epsilon \geq 0} \left \{ 4 \epsilon + 12 \int_\epsilon^{B} \sqrt{\frac{\log \cN(\tau)}{n}} d\tau \right \}
\end{equation}

Alternatively,
\[
\hat{R}_n(\mathcal{F}) \leq 12 \int_0^B \sqrt{\frac{\log \cN(C, L_S^2, \epsilon)}{n}} \, d\varepsilon.
\]

\end{theorem}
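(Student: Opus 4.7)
The plan is a standard Dudley chaining argument applied to the metric space $(\mathcal{F}, L_S^2)$. First I would fix a diameter bound: since $h \equiv 0$ lies in $L_S^2$-distance at most $B$ from every $f \in \mathcal{F}$, the whole class sits inside the $L_S^2$-ball of radius $B$ around $h$. I then fix a geometric sequence of scales $\epsilon_k := B\,2^{-k}$ for $k = 0, 1, 2, \dots$, and at each scale I choose a minimal $\epsilon_k$-cover $\mathcal{C}_k$ of $\mathcal{F}$ with $|\mathcal{C}_k| = \mathcal{N}(\epsilon_k)$. For any $f \in \mathcal{F}$ I pick a chain $f_0, f_1, f_2, \dots$ with $f_k \in \mathcal{C}_k$ and $L_S^2(f, f_k) \leq \epsilon_k$; I choose $f_0 = h$ which costs $L_S^2(f_0, f) \leq B = \epsilon_0$.

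Next I would truncate the chain at a level $K$ chosen so that $\epsilon_K \leq \epsilon \leq \epsilon_{K-1}$ for a prescribed $\epsilon \geq 0$, and write the telescoping identity
\begin{equation*}
f = f_0 + \sum_{k=1}^{K} (f_k - f_{k-1}) + (f - f_K).
\end{equation*}
Taking Rademacher averages and using linearity together with the trivial bound $\hat{R}_n(\{f - f_K\}) \leq L_S^2(f, f_K) \leq \epsilon_K \leq 2\epsilon$ (after supping over $f$, and using Cauchy--Schwarz on the Rademacher sum), I get a residual term of order $\epsilon$. For each link $k$, the difference $f_k - f_{k-1}$ lies in a finite set of cardinality at most $\mathcal{N}(\epsilon_k)\mathcal{N}(\epsilon_{k-1}) \leq \mathcal{N}(\epsilon_k)^2$ and satisfies $L_S^2(f_k - f_{k-1}, 0) \leq \epsilon_k + \epsilon_{k-1} \leq 3\epsilon_k$. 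Massart's finite class lemma then gives
\begin{equation*}
\hat{R}_n(\{f_k - f_{k-1}\}) \leq 3\epsilon_k \sqrt{\frac{2 \log \mathcal{N}(\epsilon_k)^2}{n}} \leq 6\epsilon_k \sqrt{\frac{\log \mathcal{N}(\epsilon_k)}{n}} \cdot \sqrt{2}.
\end{equation*}

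Finally I would sum over $k = 1, \dots, K$, absorb absolute constants, and convert the geometric sum $\sum_k \epsilon_k \sqrt{\log \mathcal{N}(\epsilon_k)/n}$ into the Dudley integral $\int_{\epsilon}^{B} \sqrt{\log \mathcal{N}(\tau)/n}\, d\tau$ by comparing $\epsilon_k - \epsilon_{k+1} = \epsilon_k/2$ with the width of an integration interval on which $\mathcal{N}(\tau) \geq \mathcal{N}(\epsilon_k)$ (since $\mathcal{N}$ is nonincreasing). Combined with the residual $4\epsilon$, this yields the claimed bound for every $\epsilon \geq 0$, and taking an infimum gives the first inequality; the second follows by sending $\epsilon \to 0$ or by noting $4\epsilon \leq 4 \int_0^{\epsilon} \sqrt{\log \mathcal{N}(\tau)/n}\, d\tau$ is absorbed when $\epsilon = 0$. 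The only delicate point is bookkeeping on constants in the transition from the discrete sum to the integral, and ensuring the residual term $4\epsilon$ comes out sharp rather than, say, $6\epsilon$; otherwise the argument is routine.
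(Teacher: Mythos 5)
Your proposal follows essentially the same Dudley chaining argument as the paper: dyadic scales $\epsilon_k = B 2^{-k}$, nested covers with $T_0 = \{h\}$, the telescoping decomposition $f = (f - f_K) + \sum_j (f_j - f_{j-1})$, Cauchy--Schwarz for the residual link, Massart's finite-class lemma for each intermediate link, and the standard sum-to-integral comparison. One small arithmetic slip worth noting: $\sqrt{2\log\mathcal{N}^2} = 2\sqrt{\log\mathcal{N}}$ exactly, so the trailing $\sqrt{2}$ in your Massart display is spurious --- the link bound is exactly $6\epsilon_k\sqrt{\log\mathcal{N}(\epsilon_k)/n}$, which is what yields the clean constant $12$ in the final integral.
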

\begin{proof}

Let $k$ be a positive integer. For all $0 \leq j \leq k$, define $\epsilon_j := B \cdot 2^{-j}$ and let $T_j$ be a minimum $\epsilon_j$-cover of $(\cF, L_S^2)$. It follows that $|T_j| = \cN(\cF, L_S^2, \epsilon_j)$. We let $T_0 := \{h\}$ since $L_S^2(f, h) \leq B = \epsilon_0$ for all $f \in \cF$. Note that $\cN(\cF, L_S^2, \epsilon)$ is non-increasing with respect to $\epsilon$, hence $|T_{j-1}| = \cN(\cF, L_S^2,\epsilon_{j-1}) \leq \cN(\cF, L_S^2,\epsilon_j) = |T_j|$ for $0 < j \leq k$. Without loss of generality, we assume $T_k$ is a finite set (and hence all $T_j$'s are finite sets), since otherwise $\cN(\cF, L_S^2,\epsilon)$ is unbounded for $0 \leq \epsilon \leq \epsilon_k$, in which case the integral is also unbounded and the inequality is trivially true.

For each $f \in \cF$ and $0 \leq j \leq k$, let $f_j \in T_j$ be a function such that $f_j$ covers $f$ in $T_j$, that is, $L_S^2(f, f_j) \leq \epsilon_j$. Then we can represent each $f$ by
\[
f = f - f_k + \sum_{j=1}^k (f_j - f_{j-1}),
\]
where $f_0 = h$. Then we have
\[
\hat{R}_S(\cF) = \E_\sigma \left[ \sup_{f \in \cF} \frac{1}{m} \sum_{i=1}^m \sigma_i \left( f(x_i) - f_k(x_i) + \sum_{j=1}^k (f_j(x_i) - f_{j-1}(x_i)) \right) \right].
\]
Applying the triangle inequality, we obtain
\[
\hat{R}_S(\cF) \leq \E_\sigma \left[ \sup_{f \in \cF} \frac{1}{m} \sum_{i=1}^m \sigma_i (f(x_i) - f_k(x_i)) \right] + \sum_{j=1}^k \E_\sigma \left[ \sup_{f \in \cF} \frac{1}{m} \sum_{i=1}^m \sigma_i (f_j(x_i) - f_{j-1}(x_i)) \right].
\]
We consider the first and second terms in the last expression separately. For the first term, recall that the $\sigma_i$’s are random variables taken from $\{-1, +1\}$. Applying the Cauchy-Schwarz inequality, we obtain
\[
\E_\sigma \left[ \sup_{f \in \cF} \frac{1}{m} \sum_{i=1}^m \sigma_i (f(x_i) - f_k(x_i)) \right] \leq \E_\sigma \left[ \sup_{f \in \cF} \frac{1}{m} \sqrt{\sum_{i=1}^m \sigma_i^2 \sum_{i=1}^m (f(x_i) - f_k(x_i))^2} \right].
\]
Since $\sigma_i$ are independent random variables with variance 1, we can simplify this to
\[
\leq \E_\sigma \left[ \sup_{f \in \cF} \sqrt{\frac{1}{m} \sum_{i=1}^m (f(x_i) - f_k(x_i))^2} \right] = \E_\sigma \left[ \sup_{f \in \cF} L_S^2(f, f_k) \right] \leq \epsilon_k.
\]

Now, we consider the second term $\sum_{j=1}^k \E_\sigma \left[ \sup_{f \in \cF} \frac{1}{m} \sum_{i=1}^m \sigma_i (f_j(x_i) - f_{j-1}(x_i)) \right]$. We fix $j$, and define $g_f := f_j - f_{j-1}$. That is, we define a new function $g_f$ for each $f \in \cF$. Let $G \eqdef \{g_f : f \in \cF\}$ be the collection of $g$ functions. It follows that
\[
\E_\sigma \left[ \sup_{f \in \cF} \frac{1}{m} \sum_{i=1}^m \sigma_i (f_j(x_i) - f_{j-1}(x_i)) \right] = \E_\sigma \left[ \sup_{g \in G} \frac{1}{m} \sum_{i=1}^m \sigma_i g(x_i) \right].
\]
Since $f_j \in T_j$ and $f_{j-1} \in T_{j-1}$, we have $|G| \leq |T_j| |T_{j-1}| \leq |T_j|^2$. Since $T_j$ is finite, the set $G$ is also finite. Also, for each $g = g_f \in G$, we have
\[
\sqrt{\sum_{i=1}^m g_f(x_i)^2} = L_S^2(f_j, f_{j-1}) \sqrt{m} \leq (L_S^2(f, f_j) + L_S^2(f, f_{j-1})) \sqrt{m} \leq (3 \epsilon_j) \sqrt{m}.
\]
Thus, $\sup_{g \in G} \sqrt{\sum_{i=1}^m g(x_i)^2} \leq 3 \epsilon_j \sqrt{m}$. Applying Massart’s Lemma to the functions $G$, we obtain
\[
\E_\sigma \left[ \sup_{g \in G} \frac{1}{m} \sum_{i=1}^m \sigma_i g(x_i) \right] \leq 3 \epsilon_j \sqrt{\frac{2 \ln |G|}{m}} \leq 6 \epsilon_j \sqrt{\frac{\ln |T_j|}{m}}.
\]

Combining all inequalities, we get
\[
\hat{R}_S(\cF) \leq \epsilon_k + 6 \sum_{j=1}^k \epsilon_j \sqrt{\frac{\ln |T_j|}{m}}.
\]
Taking the limit as $k \to \infty$, and using the fact that $\cN(\cF, L_S^2,\epsilon) \geq \cN(\cF, L_S^2,\epsilon_j)$ for all $\epsilon_{j+1} \leq \epsilon \leq \epsilon_j$, we arrive at
\[
\hat{R}_S(\cF) \leq 12 \int_0^B \sqrt{\frac{\ln \cN(\cF, L_S^2,\epsilon)}{m}} \, d\epsilon.
\]
\end{proof}



\section{Broader Impact}
As our work only provides a \textit{positive} theoretical foundation for commonly adopted machine learning practice, it can only have a positive social impact, as it does not introduce any new AI technology.  On the contrary, we only provide peace of mind that what is deployed in the wild works.

\end{document}